\theoremstyle{plain}
\newtheorem{theorem}{Theorem}[section]
\newtheorem{proposition}[theorem]{Proposition}
\theoremstyle{definition}
\newtheorem{remark}[theorem]{Remark}
\def\vp{{\mathbf{p}}}
\def\vq{{\mathbf{q}}}
\def\vs{{\mathbf{s}}}
\def\vu{{\mathbf{u}}}
\def\vx{{\mathbf{x}}}
\def\vz{{\mathbf{z}}}
\DeclareMathAlphabet{\mathsfit}{\encodingdefault}{\sfdefault}{m}{sl}
\SetMathAlphabet{\mathsfit}{bold}{\encodingdefault}{\sfdefault}{bx}{n}
\def\gL{{\mathcal{L}}}
\def\gN{{\mathcal{N}}}
\def\gU{{\mathcal{U}}}
\newcommand{\E}{\mathbb{E}}
\newcommand{\R}{\mathbb{R}}
\DeclareMathOperator*{\argmin}{arg\,min}
\newcommand{\deriv}{\mathrm{d}}
\newcommand{\M}{\mathrm{M}}
\newcommand{\I}{\mathrm{I}}
\DeclareRobustCommand\onedot{\futurelet\@let@token\@onedot}
\def\@onedot{\ifx\@let@token.\else.\null\fi\xspace}
\def\eg{\emph{e.g}\onedot} 
\def\ie{\emph{i.e}\onedot} 
 \def\vs{\emph{vs}\onedot}
\newcommand\blfootnote[1]{%
  \begingroup
  \renewcommand\thefootnote{}\footnote{#1}%
  \addtocounter{footnote}{-1}%
  \endgroup
}
\title{RB-Modulation: Training-Free Personalization of  Diffusion Models using Stochastic Optimal Control
}
\author{
\textbf{Litu Rout$^{1,2,*}$\qquad
Yujia Chen$^{1}$\qquad
Nataniel Ruiz$^{1}$\qquad
Abhishek Kumar$^{3}$}\\
\textbf{Constantine Caramanis$^{2}$\qquad 
Sanjay Shakkottai$^{2}$\qquad
Wen-Sheng Chu$^{1}$}\\
\vspace{1ex}
$^1$ Google\qquad$^2$ UT Austin\qquad$^3$ Google DeepMind\\
\vspace{1ex}
{\tt\small\{litu.rout,constantine,sanjay.shakkottai\}@utexas.edu
\{yujiachen,natanielruiz,abhishk,wschu\}@google.com}
}
\begin{document}
\maketitle
\vspace{-5ex}

%%% Fig 1 (V4)
\begin{figure*}[thb]
    \centering
    \includegraphics[width=\textwidth]{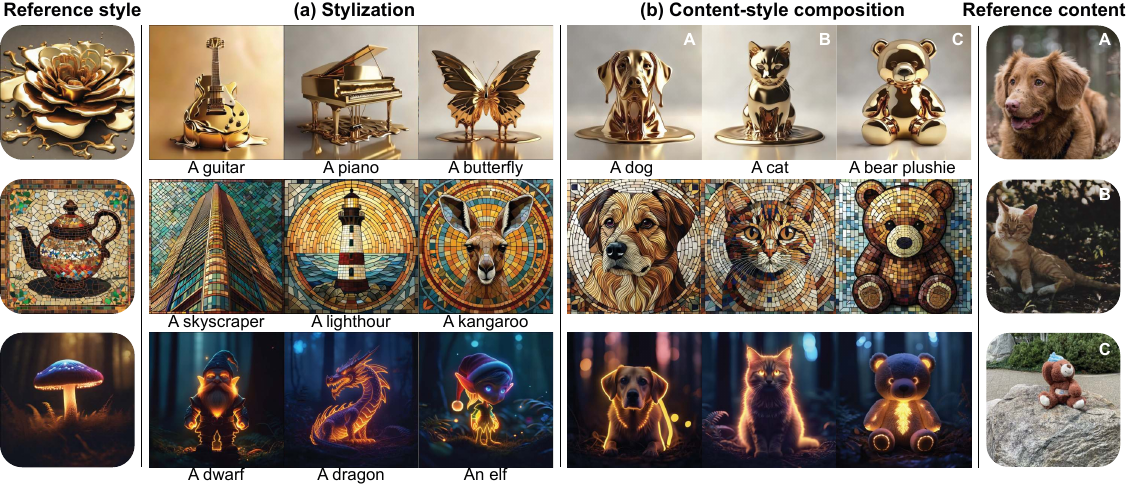}
    \vspace{-3.5ex}
    \caption{
    Given a single reference image (rounded rectangle),
    our method {\bf RB-Modulation} offers a plug-and-play solution for (a) stylization, and (b) content-style composition with various prompts while maintaining sample diversity and prompt alignment.
    For instance, given a reference  style image (\eg ``melting golden 3d rendering style'') and content image (\eg (A) ``dog''), our method adheres to the desired prompts without leaking contents from the reference style image and without being restricted to the pose of the reference content image. 
    }
    \label{fig:compare}
\end{figure*}

%%% Fig 1 (V3)
\begin{comment}
\begin{figure*}[thb]
    \begin{minipage}{0.55\textwidth}
        \begin{center}
            \includegraphics[width=1.82\textwidth]{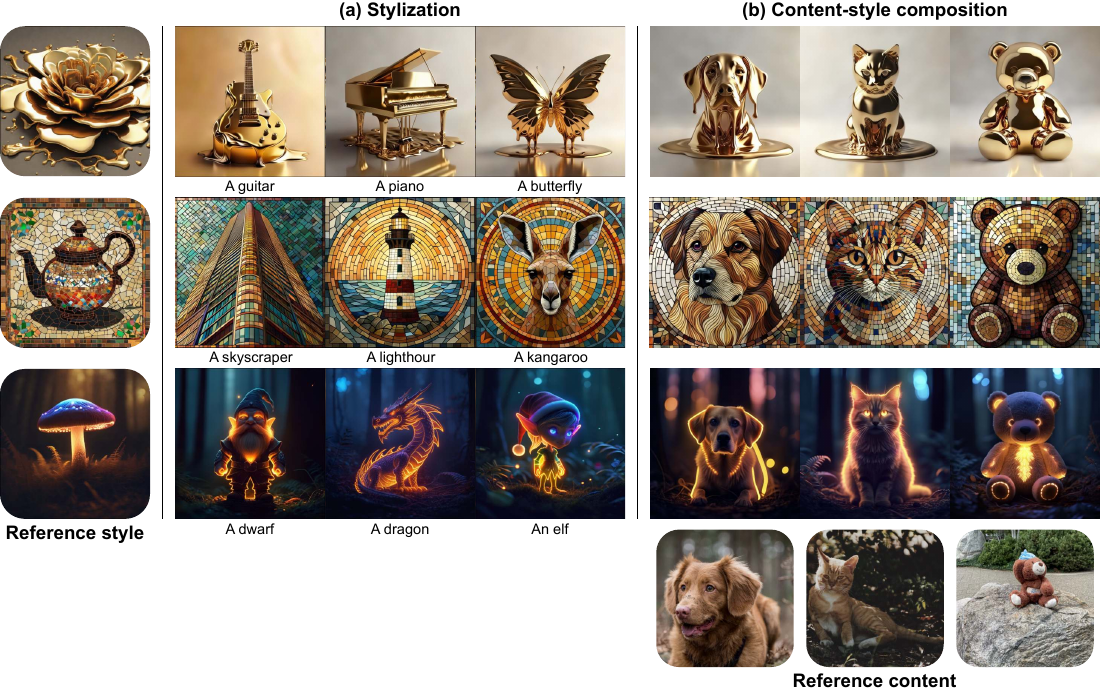}
        \end{center}
        \vspace{-14ex}
        \caption{
        Given a reference style image, our method, termed {\bf RB-Modulation}, offers a plug-and-play solution for (a) stylization using different prompts, and (b) composition  with reference content images while still maintaining sample diversity and prompt-alignment. 
        }
        \label{fig:compare}
    \end{minipage}
\end{figure*}
\end{comment}

\begin{comment}
\begin{center}
    \includegraphics[width=\textwidth]{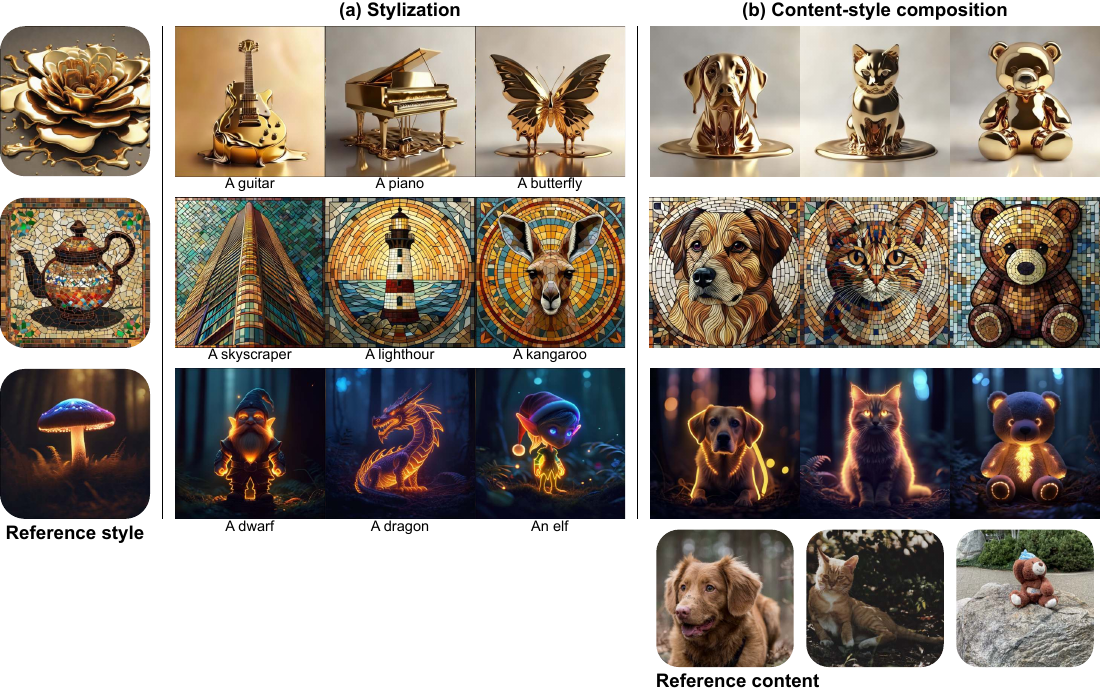}
    \vspace{-2ex}
    \captionof{figure}{
    \textbf{Reference-Based Modulation (RB-Modulation):} 
    % let's keep the name RB-Modulation, no acronym RBM
    With a reference style image, our method, termed RB-Modulation, can stylize various subject images using different prompts (left) or perform content composition with another image (right).
    RB-Modulation achieves high-fidelity compositions without the need of training or finetuning on the diffusion model, making it plug-and-play for any diffusion models.
    }
    \label{fig:main}
\end{center}
\end{comment}

\vspace{-2ex}
\begin{abstract}
\vspace{-2ex}
We propose Reference-Based Modulation (RB-Modulation), a new plug-and-play solution for training-free personalization of diffusion models.
Existing training-free approaches exhibit difficulties in (a) style extraction from reference images in the absence of additional style or content text descriptions, (b) unwanted content leakage from reference style images, and (c) effective composition of style and content. 
RB-Modulation is built on a novel stochastic optimal controller where a style descriptor encodes the desired attributes through a terminal cost. 
The resulting drift not only overcomes the difficulties above, but also ensures high fidelity to the reference style and adheres to the given text prompt. 
We also introduce a cross-attention-based feature aggregation scheme that allows RB-Modulation to decouple content and style from the reference image.
With theoretical justification and empirical evidence, our framework demonstrates precise extraction and control of \textit{content} and \textit{style} in a training-free manner. 
Further, our method allows a seamless composition of content and style, which marks a departure from the dependency on external adapters or ControlNets\blfootnote{$^*$ This work was done during an internship at Google.
}.
\end{abstract}

% \vspace{-3ex}
\section{Introduction}
\label{sec:intro}
% \vspace{-2ex}
Text-to-image (T2I) generative models~\cite{dalle,dalle2,ldm,imagen} have excelled in crafting visually appealing images from text prompts.
These T2I models are increasingly employed in creative endeavors such as visual arts~\cite{imagereward}, gaming~\cite{pearce2023imitating}, personalized image synthesis~\cite{dreambooth,consistentid,lora,ziplora}, stylized rendering~\cite{styledrop,stylealigned,instantstyle,ssa}, and image inversion or editing~\cite{indi,psld,stsl,nti}.
Content creators often need precise control over both the \emph{content} and the \emph{style} of generated images to match their vision.
While the content of an image can be conveyed through text, articulating an artist's unique style -- characterized by distinct brushstrokes, color palette, material, and texture -- is substantially more nuanced and complex.
This has led to research on personalization through visual prompting \cite{styledrop,stylealigned,instantstyle}.

Recent studies have focused on finetuning pre-trained T2I models to learn style from a set of reference images~\cite{gal2022image,dreambooth,styledrop,lora}. This involves optimizing the model's text embeddings, model weights, or both, using the  denoising diffusion loss.
However, these methods demand substantial computational resources for training or finetuning large-scale foundation models, thus making them expensive to adapt to new, unseen styles.
Furthermore, these methods often depend on human-curated images of the same style, which is less practical and can compromise quality when only a single reference image is available.

In training-free \textbf{stylization}, recent methods~\cite{stylealigned,instantstyle,ssa} manipulate keys and values within the attention layers using just one reference style image. 
These methods face challenges in both extracting the style from the reference style image and accurately transferring the style to a target content image. 
For instance, during the DDIM inversion step~\cite{ddim} utilized by StyleAligned~\cite{stylealigned}, fine-grained details tend to be compromised.
To mitigate this issue, InstantStyle~\cite{instantstyle} incorporates features from the reference style image into specific layers of a previously trained IP-Adapter~\cite{ipadapter}.  
However, identifying the exact layer for feature injection in a model is complex and not universally applicable across models. 
Also, feature injection can cause content leakage from the style image into the generated content. 
Moving on to content-style \textbf{composition}, InstantStyle~\cite{instantstyle} employs a ControlNet~\cite{controlnet} (an additionally trained network) to preserve image layout, which inadvertently limits its diversity.

We introduce Reference-Based Modulation (RB-Modulation), a novel approach for content and style personalization that eliminates the need for training or finetuning diffusion models (\eg ControlNet~\citep{controlnet} or adapters~\citep{ipadapter,lora}).
Our work reveals that the reverse dynamics in diffusion models can be formulated as stochastic optimal control with a terminal cost.
By incorporating style features into the controller's terminal cost, we modulate the drift field in diffusion models' reverse dynamics, enabling training-free personalization.
Unlike conventional attention processors that often leak content from the reference style image, we propose to enhance the image fidelity via an Attention Feature Aggregation (AFA) module that decouples content from reference style image. 
We demonstrate the effectiveness of our method in stylization \cite{stylealigned,instantstyle,ssa} and style+content composition, as illustrated in Fig.~\ref{fig:compare}(a) and (b), respectively.
Our experiments show that RB-Modulation outperforms current SoTA methods~\cite{stylealigned,instantstyle} in terms of human preference and prompt-alignment metrics.

\textbf{Our contributions are summarized as follows:}
\begin{itemize}[noitemsep,topsep=0pt,parsep=0pt,partopsep=0pt]
    \item We present reference-based modulation (RB-Modulation), a novel stochastic optimal control framework that enables training-free, personalized style and content control, with a new Attention Feature Aggregation (AFA) module to maintain high fidelity to the reference image while adhering to the given prompt (\S\ref{sec:method}).
    
    \item We provide theoretical justifications connecting optimal control and reverse diffusion dynamics. We leverage this connection to incorporate desired attributes (\eg, style) in our controller's terminal cost and  personalize T2I models in a training-free manner (\S\ref{sec:theory}). 
    
    \item We perform extensive experiments covering stylization and content-style composition, demonstrating superior performance over SoTA methods in human preference metrics (\S\ref{sec:exps}).
\end{itemize}

%============================================================
% \vspace{-3ex}
\section{Related Work}
\label{sec:rel-work}
% \vspace{-2ex}
\noindent\textbf{Personalization of T2I models:}
T2I generative models~\cite{ldm,sdxl,sc} are pushing the boundaries in generating aesthetically pleasing images by precisely interpreting text prompts.
Their ability to follow a desired text has unlocked new avenues in \textit{personalized} content creation, including text-guided image editing~\cite{stsl,nti}, solving inverse problems~\cite{psld,stsl}, concept-driven generation~\cite{dreambooth,key,multi-concept,chen2024subject}, personalized outpainting~\cite{tang2023realfill}, identity-preservation~\cite{ruiz2023hyperdreambooth,consistentid,instantid}, and stylized synthesis~\cite{styledrop,instantstyle,stylealigned,ziplora}.
To tailor T2I models for a specific style (\eg, painting) or content (\eg, object), existing methods follow one of two recipes: (1) full finetuning (FT)~\cite{dreambooth,everaert2023diffusion} or parameter efficient finetuning (PEFT)~\cite{multi-concept,ipadapter,lora,styledrop,ziplora} and (2) training(finetuning)-free~\cite{stylealigned,instantstyle,ssa}, which we discuss below.

\noindent\textbf{Finetuning T2I models for personalization:}
FT
~\cite{dreambooth,everaert2023diffusion} 
and PEFT
~\cite{multi-concept,ipadapter,lora,styledrop,ziplora} 
methods, such as IP-Adapter~\cite{ipadapter}, LoRA~\cite{lora}, ZipLoRA~\cite{ziplora}, and StyleDrop~\cite{styledrop}, excel at capturing style or object details when the underlying T2I model is finetuned on a few (typically 4) reference images for few thousand iterations.
With the increasing size of T2I models,
% (Glide~\cite{glide}: 3.5B, SDXL~\cite{sdxl}: 3.4B, SC~\cite{sc}: 5.1B, DALL-E~\cite{dalle}: 12B, Parti~\cite{parti}: 20.7B),
PEFT is preferred over FT due to fewer trainable parameters.
However, the challenge of curating a set of reference images and resource-intensive finetuning for every style or content remains largely unexplored.

\noindent\textbf{Training(finetuning)-free T2I models for personalization:} 
The need to improve T2I model finetuning has sparked interests in training-free personalization methods. %~\cite{stylealigned,ssa,instantstyle}. 
In \textbf{StyleAligned}~\cite{stylealigned}, a reference style image and a text prompt describing the style are used to extract style features via DDIM inversion \cite{ddim}. 
Target queries and keys are then normalized using adaptive instance normalization~\cite{AdaIn} based on reference counterparts. 
Finally, reference image keys and values are merged with DDIM-inverted latents in self-attention layers, which tends to leak content information from the reference style image (Figure~\ref{fig:style}). 
Moreover, the need for textual description in the DDIM inversion step can degrade its performance.
%Among training-free methods, \textbf{StyleAligned}~\cite{stylealigned} uses a reference style image and a text prompt describing the style to extract style features using DDIM inversion~\cite{ddim}, then normalizes target queries and keys via adaptive instance normalization~\cite{AdaIn} based on reference counterparts.
%Finally, it merges reference image keys and values with DDIM-inverted latents in the self-attention layers. 
%This attention sharing mechanism leaks content information from the reference style image as shown in Figure Z \vincent{fix}. 
%Moreover, the requirement of textual description in the DDIM inversion step can degrade its performance.
\textbf{Swapping Self-Attention (SSA)}~\cite{ssa} 
addresses these limitations by replacing the target keys and values in self-attention layers with those from a reference style image.
Yet, it still relies on DDIM inversion to cache keys and values of the reference style, which tends to compromise fine-grained details~\cite{instantstyle}. 
Both StyleAligned~\cite{stylealigned} and SSA~\cite{ssa} require two reverse processes to share their attention layer features and thus demand significant memory.
Recently, \textbf{InstantStyle}~\cite{instantstyle} injects reference style features into specific cross-attention layers of IP-Adapter~\cite{ipadapter}, addressing two key limitations: DDIM inversion and memory-intensive reverse processes.
However, pinpointing the exact layers for feature injection is complex, and they may not generalize to other models.
In addition, when composing style and content, InstantStyle~\cite{instantstyle} relies on ControlNet~\cite{controlnet}, which can limit the diversity of generated images to fixed layouts and deviate from the prompt.

\noindent\textbf{Optimal Control:} 
Stochastic optimal control finds wide applications in diverse fields such as molecular dynamics~\cite{holdijk2024stochastic},
economics~\cite{fleming2012deterministic}, 
non-convex optimization~\cite{chaudhari2018deep}, robotics~\cite{theodorou2011iterative}, 
and mean-field games~\cite{carmona2018probabilistic}
Despite its extensive use, it has been less explored in personalizing diffusion models. 
In this paper, we introduce a novel framework leveraging the main concepts from optimal control to achieve training-free personalization.
A key aspect of optimal control is designing a controller to guide a stochastic process towards a desired terminal condition~\cite{fleming2012deterministic}.
This aligns with our goal of training-free personalization, as we target a specific style or content at the end of the reverse diffusion process, which can be incorporated in the controller's terminal condition.

RB-Modulation overcomes several challenges encountered by SoTA methods~\cite{stylealigned,ssa,instantstyle}. 
Since RB-Modulation does not require DDIM inversion, it retains fine-grained details unlike StyleAligned~\cite{stylealigned}. 
Using a stochastic controller to refine the trajectory of a single reverse process, it overcomes the limitation of coupled reverse processes~\cite{stylealigned}. 
By incorporating a style descriptor in our controller's terminal cost, it eliminates the dependency on Adapters~\cite{ipadapter,lora} or ControlNets~\cite{controlnet} by InstantStyle~\cite{instantstyle}.

%============================================================
% \vspace{-2ex}
\section{Preliminaries}
\label{sec:prelim-method}
% \vspace{-2ex}

% \vspace{-2ex}
\noindent\textbf{Diffusion models} consist of two stochastic processes:
(a) \textit{noising process}, modeled by a Stochastic Differential Equation (SDE) known as forward-SDE:
$\mathrm{d}X_t = f(X_t,t) \,\deriv t + g(X_t,t)\,\deriv W_t, X_0 \sim p_0$,
%\begin{align}
%    % \tag{Forward-SDE}
%    \label{eq:dm-fwd}
%    \mathrm{d}X_t = f(X_t,t) \,\deriv t + g(X_t,t)\,\deriv W_t, \qquad X_0 \sim p_0;
%\end{align}
and (b) \textit{denoising process}, modeled by the time-reversal of forward-SDE under mild regularity conditions~\cite{anderson}, also known as reverse-SDE:
\begin{align}
    % \tag{Reverse-SDE}
    \label{eq:dm-rev}
    \mathrm{d}X_t =  \left[f(X_t,t) - g^2(X_t,t) \nabla \log p(X_t,t)\right] \,\deriv t + g(X_t,t)\,\deriv W_t, \qquad X_1 \sim \gN\left(0,\mathrm{I}_d\right).
\end{align}
Here, $W=(W_t)_{t\geq 0}$ is standard Brownian motion in a filtered probability space, $(\Omega, \mathcal{F}, (\mathcal{F}_t)_{t\geq 0}, \mathcal{P})$,  
$p(\cdot,t)$ denotes the marginal density of $p$ at time $t$, and $\nabla \log p_t(\cdot)$ the corresponding score function. 
$f(X_t,t)$ and $g(X_t,t)$
are called drift and volatility, respectively. 
A popular choice of $f(X_t,t) = -X_t$ and $g(X_t,t)=\sqrt{2}$ corresponds to the well-known forward Ornstein-Uhlenbeck (OU) process.

For T2I generation, the reverse-SDE \eqref{eq:dm-rev} is simulated  using a neural network $s\left(\vx_t,t;\theta\right)$
\cite{hyvarinen2005estimation,dsm} to approximate $\nabla_\vx \log p(\vx_t,t)$.
Importantly, to accelerate the sampling process in practice~\cite{ddim,edm,zhang2022fast}, the reverse-SDE \eqref{eq:dm-rev} shares the same path measure with a probability flow ODE: 
$\mathrm{d}X_t =  \left[f(X_t,t) - \frac{1}{2}g^2(X_t,t) \nabla \log p(X_t,t)\right] \,\deriv t$, where $X_1 \sim \gN\left(0,\mathrm{I}_d\right)$.

\textbf{Personalized diffusion models} either fully finetune $\theta$ of $s\left(\vx_t,t;\theta\right)$ \cite{dreambooth,everaert2023diffusion}, or train a parameter-efficient adapter $\Delta\theta$ for $s\left(\vx_t,t;\theta+\Delta \theta\right)$ on reference style images \cite{lora,styledrop,ziplora}.
Our method does not finetune $\theta$ or train $\Delta\theta$.
Instead, we derive a new drift field through a stochastic optimal controller that {\em modulates the drift} of the standard reverse-SDE~\eqref{eq:dm-rev}.

%============================================================
% \vspace{-2ex}
\section{Method}
\label{sec:method}
% \vspace{-2ex}
\noindent\textbf{Personalization using optimal control:} 
Normalize time $t$ by the total number of diffusion steps $T$ such that $0\leq t \leq 1$. 
Let us denote by $u : \R^d \times [0,1] \rightarrow \R^d$ a controller 
from the admissible set of controls 
$\gU \subseteq \R^d$,
$X_t^u \in \R^d$ a state variable,
% $b(X_t^u,t)$ the drift, 
% $\sigma(X_t^u,t)$ the volatility,
$\ell : \R^d \times \R^d \times [0,1] \to \R$ the transient cost, and $h : \R^d \to \R$ the terminal cost of the reverse process $(X_t^u)_{t=1}^0$. 
We show in \S\ref{sec:theory} that training-free personalization can be formulated as a control problem where the drift of the standard reverse-SDE~\eqref{eq:dm-rev} is modified via RB-modulation:
\begin{align}
    \label{eq:socp-1}
    &\min_{u \in \mathcal{U}} \mathbb{E} \Big[ \int_1^0
    \ell\left(X^u_t, u(X^u_t,t), t \right)\deriv t + \gamma
    h(X^u_0)\Big],\quad \text{where}\\
    \nonumber
    &\mathrm{d}X^u_t = \left[f(X_t^u,t) - g^2(X_t^u,t) \nabla \log p(X_t^u,t)+u(X^u_t,t)\right] \deriv t + g(X^u_t,t)\deriv W_t, X^u_1 \sim \gN\left( 0,\mathrm{I}_d\right).
    % \label{eq:socp-2}
\end{align}
Importantly, the terminal cost $h(\cdot)$, weighted by $\gamma$, captures the discrepancy in feature space between the styles of the reference image and the generated image.
The resulting controller $u(\cdot, t)$ modulates the drift over time to satisfy this terminal cost. We derive the solution to this optimal control problem through the
Hamilton-Jacobi-Bellman (HJB) equation~\cite{fleming2012deterministic}; refer to Appendix~\ref{sec:addn-theory} for details.
Our proposed RB-Modulation \textbf{Algorithm~\ref{alg:rbm-small}} has two key components: (a) stochastic optimal controller and (b) attention feature aggregation. Below, we discuss each in turn. 

\noindent\textbf{(a) Stochastic Optimal Controller (SOC):}
We show that the reverse dynamics in diffusion models can be framed as a stochastic optimal control problem with a quadratic terminal cost (theoretical analysis in \S\ref{sec:theory}).  
For personalization using a reference style image $X_0^f = \vz_0$, we use a Consistent Style Descriptor (CSD)~\cite{csd} to extract style features $\Psi(X_0^f)$.
Since the score functions $s\left(\vx_t,t;\theta\right) \!\approx\! \nabla \log p\left(X_t,t\right)$ are available from pre-trained diffusion models~\cite{sdxl,sc}, our goal is to add a correction term $u(\cdot,t)$ to modulate the reverse-SDE and minimize the overall cost \eqref{eq:socp-1}.
We approximate $X_0^u$ with its conditional expectation using Tweedie's formula~\cite{psld,stsl}.
Finally, we incorporate the style features into our controller's terminal cost as: $h\left(X^u_0\right) = \lVert \Psi(X^f_0) - \Psi(\E\left[X^u_0 | X^u_t\right])\rVert^2_2$.

Our theoretical results (\S\ref{sec:theory}) suggest that the optimal controller can be obtained by solving the HJB equation and letting $\gamma \rightarrow \infty$. In practice, this translates to dropping the transient cost $\ell\left(X^u_t, u(X^u_t,t), t \right)$ and solving \eqref{eq:socp-1} with only the terminal constraint, \ie,
\begin{align}
    \label{eq:dm-cont-rev-1}
    &\min_{u \in \mathcal{U}} 
    \lVert \Psi(X^f_0) - \Psi(\E\left[X^u_0 | X^u_t\right])\rVert^2_2.
    % ,\\
    % &
    % \mathrm{d}X^u_t =  \left[f(X^u_t,t) - g^2(X^u_t,t) \nabla \log p(X^u_t,t) + u\left(X^u_t,t \right)\right] \,\deriv t + g(X^u_t,t)\,\deriv W_t,\, X_1 \sim \gN\left(0,\mathrm{I}_d\right).
    % \nonumber
\end{align}
Thus, we solve \eqref{eq:dm-cont-rev-1} to find the optimal control $u$ and use this controller in the reverse dynamics~\eqref{eq:socp-1} to update the current state from $X^u_t$ to $X^u_{t-\Delta t}$ (recall that time flows backwards in the reverse-SDE~\eqref{eq:dm-rev}). Our implementation of~\eqref{eq:dm-cont-rev-1} is given in \textbf{Algorithm~\ref{alg:rbm-small}}, which follows from our theoretical insights.

\begin{figure}[t]
\begin{minipage}[t]{0.48\textwidth}
    \vspace{-18ex}
    \begin{algorithm}[H]
    \scriptsize
    % \normalsize
    % \setstretch{1.5}
    \caption{Reference-Based Modulation}
    \label{alg:rbm-small}
         \KwIn{Diffusion steps $T$,
         reference prompt $\vp$,
         reference image $\vz_0$,
         style descriptor $\Psi(\cdot)$,
        %  semantic compressor $\enc(\cdot)$,
        %  model previewer $\dec(\cdot)$,    
         score network $s(\cdot,\cdot,\cdot;\theta)$\\
         \textbf{Tunable parameter:} Stepsize $\eta$, optimization steps $M$
         }
        \KwOut{Personalized latent $X^u_0$}
        Initialize $\vx_T \gets \gN\left(0,\mathrm{I}_d\right)$\\
        % \hfill $\triangleright$ proposed reverse process\\
        \For{$t=T$ \KwTo $1$}{
        Initialize controller $u=0$\\
        \For{$m=1$ \KwTo $M$}{
            $\hat{\vx}_t = \vx_t + u$ 
            \hfill $\triangleright$ controlled state\\
            $\bar{X}^u_0
            % =\E\Big[X^u_0|X^u_t= \hat{\vx}_t\Big] 
            = \frac{\hat{\vx}_t}{\sqrt{\bar{\alpha}_t}} + \frac{(1-\bar{\alpha}_t)}{\sqrt{\bar{\alpha}_t}} s\left(\hat{\vx}_t,t,\vp;\theta\right)$ 
            % \hfill $\triangleright$ terminal state
            \\
            $h(\bar{X}^u_0)=\lVert \Psi(\vz_0) - \Psi(\bar{X}^u_0)\rVert^2_2$ using Eq.~\eqref{eq:dm-cont-rev-1}\\
            $u = u - \eta \nabla_u h(\bar{X}^u_0)$ 
            \hfill $\triangleright$ update controller
            }
        $\vx^*_t = \vx_t + u$
        \hfill $\triangleright$ optimally controlled state
        \\
        $\bar{X}^u_0
        % =\E\Big[X^u_0|X^u_t= \vx^*_t\Big] 
        = \frac{\vx^*_t}{\sqrt{\bar{\alpha}_t}} + \frac{(1-\bar{\alpha}_t)}{\sqrt{\bar{\alpha}_t}} s\left(\vx^*_t,t,\vp;\theta\right)$
        $\triangleright$ terminal state
        \\
        $\vx_{t-1} \gets \text{DDIM}(\bar{X}^u_0,\vx^*_t)$
        \hfill $\triangleright$ one step reverse-SDE~\cite{ddim}
        }
        \Return $X^u_0$
    \end{algorithm}
  \end{minipage}
  \hfill
  \begin{minipage}[t]{0.48\textwidth}
        \centering
        \includegraphics[width=\textwidth]{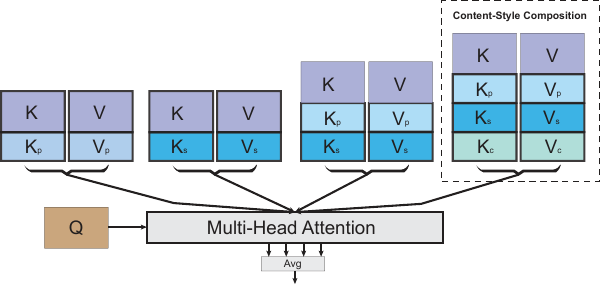}
        \caption{{\bf Attention Feature Aggregation (AFA):}
      Within the cross-attention layers, the keys and values from the previous layers ($K$,$V$), text embedding ($K_p$,$V_p$), reference style image ($K_s$,$V_s$) and reference content image  ($K_c$,$V_c$) are concatenated and processed separately to disentangle the information, which is followed by an averaging layer for the output. $K_c$,$V_c$ and only used for content-style composition.}
      \label{fig:attn}
 \end{minipage}
 \vspace{-3ex}
\end{figure}

\noindent\textbf{Implementation challenge:}
For smaller generative models~\cite{ldm}, we can directly solve our control problem \eqref{eq:dm-cont-rev-1}.
However, for larger models~\cite{sdxl,sc}, optimizing our control objective~\eqref{eq:dm-cont-rev-1} requires back propagation through the score network $s\left(\vx_t,t;\theta\right)$ with tentatively billions of parameters.
This significantly increases time and memory complexity \cite{psld,stsl}.

We propose a proximal gradient descent approach to address this challenge. 
Recall that the key ingredient of our \textbf{Algorithm~\ref{alg:rbm-small}} is to find the previous state $X_{t-\Delta t}$ by modulating the current state $X_t$ based on an optimal controller $u^*$. 
The optimal controller $u^*$ is obtained by minimizing the discrepancy in style between $\bar{X}^u_0\coloneqq\E\Big[X^u_0 |X^u_t=\vx_t\Big]$, obtained using our controlled reverse-SDE~\eqref{eq:dm-cont-rev-1}, and the reference style image $\vz_0$.
% Line~7 in \textbf{Algorithm~\ref{alg:rbm-small}},
Motivated by this interpretation, an alternate \textbf{Algorithm~\ref{alg:rbm-large}} (see Appendix~\ref{sec:add-method}) avoids back propagation through $s(\vx_t,t;\theta)$ by introducing a dummy variable $\vx_0$, which serves as a proxy for $\bar{X}_0^u$ in the terminal cost.
Instead of forcing $\vx_0$ to be decided by the dynamics of the reverse-SDE as in \textbf{Algorithm~\ref{alg:rbm-small}}, we allow it to be only {approximately} faithful to the dynamics. 
This is implemented by adding a proximal penalty, \ie
$\vx^*_0 = \argmin_{\vx_0 \in \R^d} \lVert \Psi(X^f_0) - \Psi(\vx_0)\rVert^2_2 + \lambda \lVert \vx_0 - \E\left[X^u_0 | X^u_t\right]  \rVert_2^2$,
where the hyper-parameter $\lambda$ controls the faithfulness of the reverse dynamics.
This penalty assumes that with a small step-size in the reverse-SDE dynamics~\eqref{eq:dm-cont-rev-1}, $\vx^*_0$ and  $\E\Big[X^u_0|X^u_t= \vx_t\Big]$ will be close.
Therefore, \textbf{Algorithm~\ref{alg:rbm-large}} enables personalization of large-scale foundation models without significantly increasing time and memory complexity.

\noindent\textbf{(b) Attention Feature Aggregation (AFA):} 
Transformer-based diffusion models~\cite{ldm,sdxl,sc} consist of self-attention and cross-attention layers operating on latent embedding $\vx_t \in \R^{d\times n_h}$.
Within the attention module $\text{Attention}(Q,K,V)$, $\vx_t$ is projected into queries $Q \in \R^{d\times n_q}$, keys $K \in \R^{d\times n_q}$, and values $V \in \R^{d\times n_h}$ using linear projections.
Through $Q$, $K$, and $V$, attention layers capture global context and improve long-range dependencies within $\vx_t$.

To accommodate a reference image (\eg, style or content) while retaining prompt-alignment, we propose an Attention Feature Aggregation (AFA) module, as illustrated in Figure~\ref{fig:attn}. 
For a given prompt $\vp$, a reference style image $I_s$, and a reference content image $I_c$, we first extract the embeddings using CLIP~\cite{clip} text and image encoder, respectively.
Then, we project these embeddings into keys and values using linear projection layers.
We denote by $K_p$ and $V_p$ the keys and values from $\vp$, $K_s$ and $V_s$ from $I_s$, $K_c$ and $V_c$ from $I_c$ (used only in content-style composition).
The query $Q$ is obtained from a linear projection of $\vx_t$, and remains the same in the AFA module.
By processing the keys and values separately,
we disentangle their relative importance with respect to the state variable.
This ensures that the attention maps from text are not contaminated with attention maps from style. 
To make the text consistent with the style, we also compose the keys and values of both text and style in our attention processor. 
The final output of our AFA module is given by
\begin{align*}
% \label{eq:afa-style}
    % \nonumber
    &AFA = \text{Avg}\left(A_{text}, A_{style} , A_{text+style}\right), 
    % \\
    % \nonumber
    % &
     A_{text} = \text{Attention}(Q,[K;K_p],[V;V_p]), 
    \\
    &
    A_{style} = \text{Attention}(Q,[K;K_s],[V;V_s]),
    % \\
    % \nonumber
    % & 
    A_{text+style} = \text{Attention}(Q,[K;K_p;K_s],[V;V_p;V_s]), 
    % \nonumber
\end{align*}
where $[K;K_p] \in \R^{2d\times n_q}$ indicates concatenation of $K$ with $K_p$ along the number of tokens dimension. 
For style-content composition, we process the content image $I_c$ in the same way as the reference style image $I_s$, and obtain another set of attention outputs:
\begin{align*}
% \label{eq:afa-style}
    &AFA = \text{Avg}\left(A_{text} ,A_{style} , A_{content} , A_{content+style}\right), 
    \\
    % \nonumber
    & 
    A_{content} = \text{Attention}(Q,[K;K_c],[V;V_c]),
    % \\ 
    % \nonumber
    % &
    A_{content+style} = \text{Attention}(Q,[K;K_s;K_c],[V;V_s;V_c]).
\end{align*}
Importantly, the AFA module is computationally tractable as it only requires the computation of a multi-head attention, which is widely used in practice~\cite{sdxl}.

%============================================================
% \vspace{-2ex}
\section{Theoretical Justifications}
\label{sec:theory}
% \vspace{-2ex}

\noindent\textbf{Problem setup:} 
We outline an approach to derive the optimal controller for a special case of our control problem~\eqref{eq:socp-1}. 
We substitute $t\gets 1-t$ to account for the time reversal in the reverse-SDE~\eqref{eq:dm-rev}.
Here, $X^u_0\sim \gN\left(0,\mathrm{I}_d\right)$ and $X^u_1 \sim p_{data}$.
We consider the dynamic without the Brownian motion:
$\deriv X_t^u = v(X_t^u, u, t) \deriv t, ~~~ X_{t_0}^u = \vx_0,$
% \begin{align}
%     \label{eq:oc-dynamics-0}
%   \deriv X_t^u = v(X_t^u, u, t) \deriv t, ~~~ X_{t_0}^u = \vx_0,
% \end{align}
where $0\leq t_0 \leq t \leq t_N \leq 1$ and $v:\R^d \times \R^d \times [t_0,t_N] \rightarrow \R^d$ denotes the drift field. 
The optimal controller $u^*$ can be derived by solving the Hamilton-Jacobi-Bellman (HJB) equation~\cite{fleming2012deterministic,basar2020lecture},
see Appendix~\ref{sec:addn-theory} for details. 

\noindent\textbf{Incorporating optimal control in diffusion:} 
Following recent works~\citep{kappen2008stochastic,bridge}, we consider a dynamical system whose drift field minimizes a transient trajectory cost and a terminal cost (weighted by $\gamma$) to ensure ``closeness'' to reference content $x_1$ (Appendix~\ref{sec:gm-oc}). 
\textbf{Proposition~\ref{prop:oc-bridge}}~\citep{bridge} outlines the optimal control in the limiting setting where $\gamma \rightarrow \infty$. 
Furthermore, suppose we replace $x_1$ with its conditional expectation (discussed in Remark~\ref{rmk:gm-oc}), {\em the resulting dynamic is the standard reverse-SDE for the Orstein-Uhlenbeck (OU) diffusion process for a particular noise schedule.} 
This connection between classic linear quadratic control and the standard reverse-SDE allows us to study other diffusion problems (\eg, personalization) through the lens of stochastic optimal control.
For instance, we derive the optimal controller given reference {\em style features} $y_1$ at the terminal time.

\begin{proposition}
\label{thm:per-oc}
Suppose $A\in \R^{k\times d}$ be a linear style extractor that operates on the terminal state $X^u_1 \in \R^d$.
Given reference style features $y_1$, consider the control problem:
\begin{align*}
    & 
    \min_{u \in\gU} \int_{t_0}^{1} \frac{1}{2} \left\|u(X^u_t,t)\right\|^2 dt + \frac{\gamma}{2} \left\|AX^u_1 - y_1\right\|_2^2, 
    % \label{eq:per-oc-0}
    % \\
    % & 
    ~\text{where }\deriv X^u_t = u(X^u_t,t)\,\deriv t,~X^u_{t_0}=x_0.
\end{align*}
% \vspace{-2ex}
Then, in the limit when $\gamma \rightarrow \infty$, the optimal controller $u^* = \frac{\left(A^T A\right)^{-1} A^T\left(y_1 - A\vx_t \right)}{1-t}$, which yields the following controlled dynamic:
 $\deriv X^u_t = \frac{\left(A^T A\right)^{-1} A^T\left(y_1 - A\vx_t \right)}{1-t} \deriv t.$
%  \begin{align*}
% %  \label{eq:per-oc-1}
%      \deriv X^u_t = \frac{\left(A^T A\right)^{-1} A^T\left(y_1 - A\vx_t \right)}{1-t} \deriv t.
%  \end{align*}
\end{proposition}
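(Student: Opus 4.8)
The plan is to solve the Hamilton--Jacobi--Bellman (HJB) equation associated with this linear-quadratic control problem in closed form, then pass to the limit $\gamma\to\infty$. This mirrors the structure already used in Proposition~\ref{prop:oc-bridge}, so the main work is to track how the linear map $A$ propagates through the value function. First I would write down the cost-to-go $V(\vx,t)$ and the HJB equation $-\partial_t V = \min_{u}\big(\tfrac12\|u\|^2 + \nabla V\cdot u\big)$ with terminal condition $V(\vx,1) = \tfrac{\gamma}{2}\|A\vx - y_1\|_2^2$. The pointwise minimization in $u$ is immediate: the minimizer is $u^* = -\nabla V(\vx,t)$, and substituting back gives the reduced HJB $\partial_t V = \tfrac12\|\nabla V\|^2$.

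Next I would make the quadratic ansatz $V(\vx,t) = \tfrac12\vx^\top P(t)\vx + q(t)^\top\vx + r(t)$ with $P(t)$ symmetric, and plug it into the reduced HJB. Matching quadratic, linear, and constant terms yields a Riccati-type system: $\dot P = P^2$, $\dot q = Pq$, $\dot r = \tfrac12\|q\|^2$, with terminal data $P(1) = \gamma A^\top A$, $q(1) = -\gamma A^\top y_1$, $r(1) = \tfrac{\gamma}{2}\|y_1\|^2$. The matrix equation $\dot P = P^2$ integrates to $P(t)^{-1} = P(1)^{-1} + (1-t)I$ on the range where $P(1)$ is invertible; since $A^\top A$ is generally only positive semidefinite, I would either restrict to the row space of $A$ (where $A^\top A$ is invertible) or, more cleanly, carry the explicit $\gamma$ and take the limit at the end. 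Taking $\gamma\to\infty$, $P(1)^{-1}\to 0$, so $P(t)^{-1}\to (1-t)I$, i.e. $P(t) = \tfrac{1}{1-t}I$ — but only after accounting for the fact that $q$ and the relevant combinations live in $\mathrm{range}(A^\top)$; handling the nullspace of $A$ is where I expect the one subtlety to lie. Solving $\dot q = Pq$ with the same structure gives $q(t) = -\tfrac{1}{1-t}(A^\top A)^{-1}A^\top y_1$ in the limit (again interpreting the inverse on the appropriate subspace, so that $A\,(A^\top A)^{-1}A^\top = \Pi_{\mathrm{range}(A)}$ absorbs correctly).

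Finally I would assemble $u^* = -\nabla V(\vx,t) = -(P(t)\vx + q(t))$. In the $\gamma\to\infty$ limit this becomes
\[
u^*(\vx,t) = -\frac{1}{1-t}\vx + \frac{(A^\top A)^{-1}A^\top y_1}{1-t} = \frac{(A^\top A)^{-1}A^\top\big(y_1 - A\vx\big)}{1-t},
\]
where in the last equality I use $(A^\top A)^{-1}A^\top A = I$ on the relevant subspace. Substituting into $\deriv X^u_t = u(X^u_t,t)\deriv t$ gives the stated controlled dynamic. The main obstacle is purely the rank-deficiency of $A$: $A^\top A$ need not be invertible, so every step involving $(A^\top A)^{-1}$ has to be read as acting on $\mathrm{range}(A^\top)$ (equivalently, one should verify that the $\gamma\to\infty$ limit of $\gamma A^\top A(I + (1-t)\gamma A^\top A)^{-1}$ is exactly $\tfrac{1}{1-t}\Pi_{\mathrm{range}(A^\top)}$), and I would either state a genuine pseudo-inverse version or simply assume $A$ has full row rank so that $(A^\top A)^{-1}$ is literal; I would also note the terminal singularity at $t=1$ is expected, exactly as in the bridge/OU correspondence invoked before Proposition~\ref{thm:per-oc}, and that one can alternatively verify the claimed $u^*$ directly by checking it drives $AX^u_1 \to y_1$ and satisfies the Euler--Lagrange/Pontryagin optimality conditions for the $\gamma\to\infty$ hard-constrained problem.
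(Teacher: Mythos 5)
Your proposal is correct, but it takes a genuinely different route from the paper. The paper's proof (Appendix A.2) uses Pontryagin's minimum principle: it forms the Hamiltonian $H=\tfrac12\|u\|^2+\vp^T u$, gets $u^*=-\vp$, and exploits the fact that for this drift-free dynamic the costate is constant ($\dot\vp_t=0$), so the two-point boundary-value problem reduces to one linear equation for $\vp$; solving $\vp=\bigl(\tfrac{I}{\gamma}+A^TA(1-t_0)\bigr)^{-1}(A^TAx_0-A^Ty_1)$ and letting $\gamma\to\infty$ gives the open-loop control, which is then rewritten in feedback form at the current state and time. You instead solve the dynamic-programming side: the reduced HJB $\partial_t V=\tfrac12\|\nabla V\|^2$ with a quadratic ansatz, yielding the Riccati system $\dot P=P^2$, $\dot q=Pq$, with $P(1)=\gamma A^TA$, $q(1)=-\gamma A^Ty_1$, and then $u^*=-(P(t)\vx+q(t))$ in the $\gamma\to\infty$ limit. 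Both are standard LQ machinery and land on the same formula; the paper's costate argument is shorter because $\vp$ is literally constant, while your Riccati route produces the feedback law directly for every state (no re-substitution step) and makes the limiting behavior transparent at the level of $P(t)=\gamma A^TA\bigl(I+(1-t)\gamma A^TA\bigr)^{-1}\to\tfrac{1}{1-t}\Pi_{\mathrm{range}(A^T)}$. Your explicit treatment of rank deficiency is also a genuine improvement: the paper's limit step silently assumes $A^TA$ is invertible, whereas you correctly observe that in general one obtains the pseudo-inverse/projection version of the controller.

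One small correction: for $(A^TA)^{-1}$ to exist literally you need $A$ to have full \emph{column} rank ($\mathrm{rank}(A)=d$, so $k\ge d$), not full row rank as you wrote; with full row rank but $k<d$ you are exactly in the projection regime you describe, and the statement should then be read with $(A^TA)^{+}$ in place of $(A^TA)^{-1}$. This is a terminological slip, not a gap in the argument.
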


\noindent\textbf{Implication.} 
The optimal controller depends on the reference {\em style features} $y_1$ at the terminal time, instead of the image content encoded in $x_1$.
To simulate the controlled dynamic in practice, 
we use CSD~\citep{csd} as a style feature extractor and replace $y_1$ with the style features extracted from the expected terminal state $\E\Big[X^u_1|X^u_t\Big]$, as discussed in \textbf{Appendix~\ref{sec:per-oc}}.

\noindent\textbf{Drift modulation through optimal controller:}
We then study a control problem where the velocity field is a linear combination of the state and the control variable. 
This problem is interesting to study because
the reverse-SDE dynamic of the standard OU process has a drift field of the form:
$v\left(X_t,t\right) =  -X_t - 2 \nabla \log p(X_t,t).$
% \begin{align*}
% v\left(X_t,t\right) =  -X_t - 2 \nabla \log p(X_t,t).
% \end{align*}
For a Gaussian prior $X_0 \sim \gN\left(0,\I\right)$,  the law of the OU process satisfies $\nabla \log p\left(X_t, t\right) = -X_t$, and the corresponding drift field becomes $v\left(X_t,t\right) = X_t$. Our goal is to modulate this drift field using a controller $u\left( X^u_t,t\right)$. The result below provides the structure of the optimal control (again in the setting where the terminal objective is known; see Appendix A1).

\begin{proposition}
\label{thm:per-ocs}
Suppose $A\in \R^{k\times d}$ be a linear style extractor that operates on the terminal state $X^u_1 \in \R^d$.
Let $\vp_t$ denote $\nabla_\vx V^*(\vx,t)$ in HJB equation~\eqref{thm:hjb}.
Given reference style features $y_1$, consider the control problem:
\begin{align*}
    % & 
    \min_{u \in\gU} \int_{t_0}^{1} \frac{1}{2} \left\|u(X^u_t,t)\right\|^2 dt + \frac{\gamma}{2} \left\|AX^u_1 - y_1\right\|_2^2, 
    % \label{eq:per-ocs-0}\\
    % & 
    ~\text{where }\deriv X^u_t = \Big[X^u_t+ u(X^u_t,t)\Big]\,\deriv t,~ X^u_{t_0}=x_0,
\end{align*}
 Then, the optimal controller becomes $u^*(t) = -\vp_t$, where the instantaneous state $X^u_t = \vx_t$ and $\vp_t$ satisfy the following coupled transitions:
 \begin{align*}
    \begin{bmatrix}
    \vx_t 
    \\
    \vp_t
    \end{bmatrix}
    =
    \begin{bmatrix}
    x_0 e^t - \frac{\gamma}{2} A^T\left(A\vx_1 - y_1\right)e^{1+t} 
    +
    \frac{\gamma}{2} A^T\left(A\vx_1 - y_1\right)e^{1-t}
    \\
    \gamma A^T\left(A\vx_1 - y_1\right) e^{1-t}
    \end{bmatrix}.
\end{align*}
\end{proposition}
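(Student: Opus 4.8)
The plan is to solve this linear-quadratic optimal control problem via the Hamilton-Jacobi-Bellman equation (or equivalently Pontryagin's maximum principle), exploiting the linear drift $v(X^u_t,u,t) = X^u_t + u$ and the quadratic costs. First I would write the HJB equation for the value function $V^*(\vx,t)$: since the running cost is $\tfrac12\|u\|^2$ and the dynamics are affine in $u$, the Hamiltonian $H(\vx,\vp_t,u) = \tfrac12\|u\|^2 + \vp_t^T(\vx+u)$ is minimized at $u^*(t) = -\vp_t$ where $\vp_t \coloneqq \nabla_\vx V^*(\vx,t)$, which immediately gives the claimed form of the optimal controller. Substituting back yields the HJB PDE; I would then use the standard ansatz that for LQ problems the value function is quadratic in $\vx$, so that $\vp_t$ is affine in $\vx$, reducing the PDE to ODEs for the coefficients.

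Next I would pass to the equivalent co-state formulation: by Pontryagin, the optimal trajectory $(\vx_t,\vp_t)$ satisfies the Hamiltonian system $\dot{\vx}_t = \partial H/\partial \vp_t = \vx_t + u^*(t) = \vx_t - \vp_t$ and $\dot{\vp}_t = -\partial H/\partial \vx = -\vp_t$, with the boundary conditions $\vx_{t_0} = x_0$ and the transversality condition $\vp_1 = \nabla_{\vx_1}\big(\tfrac{\gamma}{2}\|A\vx_1 - y_1\|_2^2\big) = \gamma A^T(A\vx_1 - y_1)$. The co-state equation $\dot{\vp}_t = -\vp_t$ integrates directly to $\vp_t = \vp_1 e^{1-t} = \gamma A^T(A\vx_1 - y_1)e^{1-t}$, which is exactly the second component claimed. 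For the state, I would plug this into $\dot{\vx}_t = \vx_t - \vp_t$ and solve the scalar-type linear ODE by the integrating factor $e^{-t}$: $\tfrac{d}{dt}(e^{-t}\vx_t) = -e^{-t}\vp_t = -\gamma A^T(A\vx_1-y_1)e^{1-2t}$. Integrating from $t_0$ (taking $t_0 = 0$ to match the stated formula, or carrying $t_0$ through and noting the paper writes $x_0 e^t$) and multiplying back by $e^t$ should reproduce $\vx_t = x_0 e^t - \tfrac{\gamma}{2}A^T(A\vx_1-y_1)e^{1+t} + \tfrac{\gamma}{2}A^T(A\vx_1-y_1)e^{1-t}$, once the constant of integration is fixed by $\vx_{t_0} = x_0$.

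The one genuinely delicate point — and what I expect to be the main obstacle — is the treatment of the free terminal state $\vx_1$: the expression for $\vx_t$ contains $A\vx_1$ on the right-hand side, so this is really an implicit two-point boundary value problem, and strictly one should close the loop by evaluating the $\vx_t$ formula at $t=1$ and solving the resulting linear system in $A\vx_1$ (which involves inverting a matrix like $(\gamma A A^T + \text{something})$, and connects to the $\gamma\to\infty$ limit handled in Proposition~\ref{thm:per-oc}). I would either present the solution in the stated semi-implicit form (consistent with how the paper writes it, with $\vx_1$ appearing on the right), or note that consistency at $t=1$ is automatically satisfied and defer the explicit $\gamma\to\infty$ resolution to the appendix. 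A secondary bookkeeping issue is reconciling $t_0$ versus $0$ in the $x_0 e^t$ term; I would state the computation for $t_0 = 0$ and remark that the general case only changes the homogeneous term to $x_0 e^{t-t_0}$ and shifts the integration limits. The rest is routine ODE integration, so the write-up would emphasize the HJB/Pontryagin setup, the minimization giving $u^* = -\vp_t$, and the two integrations, relegating the arithmetic to a displayed computation.
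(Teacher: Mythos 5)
Your proposal is correct and follows essentially the same route as the paper: minimize the Hamiltonian $\tfrac12\|u\|^2+\vp_t^T(\vx_t+u)$ to get $u^*=-\vp_t$, invoke the minimum principle to obtain $\dot\vx_t=\vx_t-\vp_t$, $\dot\vp_t=-\vp_t$ with $\vx_{t_0}=x_0$ and the transversality condition $\vp_1=\gamma A^T(A\vx_1-y_1)$, and then solve this linear two-point boundary value problem, leaving the answer in the same semi-implicit form (with $\vx_1$ on the right-hand side) that the paper states. The only difference is mechanical: you integrate the decoupled co-state equation first and then solve the state equation with the integrating factor $e^{-t}$, while the paper diagonalizes the $2\times 2$ system matrix via its eigenvalues $\{1,-1\}$ and fixes the two constants from the boundary conditions — both computations give identical formulas, and your remark about the $t_0$ versus $0$ bookkeeping matches the paper's implicit choice $t_0=0$ in applying the initial condition.
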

\vspace{-1ex}

\noindent\textbf{Summary.}
We build on the connection between optimal control and reverse diffusion (see Appendices~\ref{sec:gm-oc}-\ref{sec:app-mod} for details). The general strategy is to derive the optimal controller with known terminal state, and then replace the terminal state in the controller with its estimate using Tweedie's formula. For stylized models and Gaussian prior, the controllers have an explicit form. However in practice, the data distribution may not be Gaussian, and thus, we do not aim for a closed-form expression to modulate the drift.
This line of analysis, however, points to our method RB-Modulation. As discussed in \S\ref{sec:method}, we incorporate a style descriptor in our controller's terminal cost and numerically evaluate the resulting drift at each reverse time step either through back propagating through the score network (\textbf{Algorithm~\ref{alg:rbm-small}}), or an approximation based on proximal gradient updates (\textbf{Algorithm~\ref{alg:rbm-large}}).

\begin{figure}[t]
    \centering
    \includegraphics[width=\textwidth]{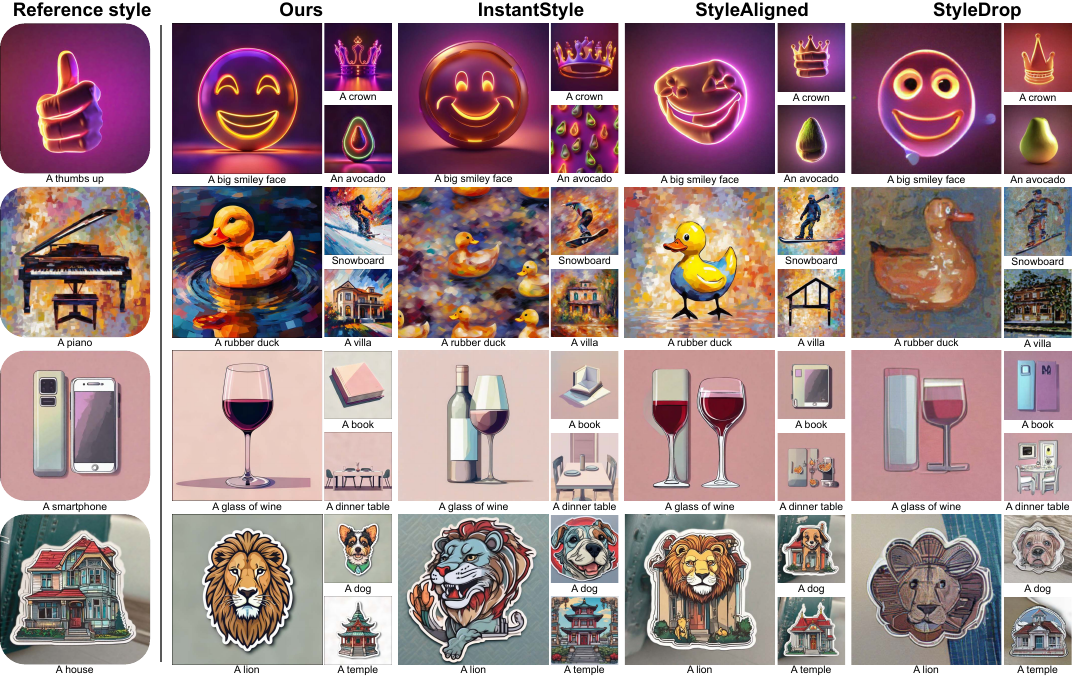}
    %\vspace{-0.5ex}
    \caption{{\bf Qualitative results for stylization:}
    A comparison with state-of-the-art methods (InstantStyle \cite{instantstyle}, StyleAligned \cite{stylealigned}, StyleDrop \cite{styledrop}) highlights our advantages in preventing information leakage from the reference style and adhering more closely to desired prompts.
    }
    \vspace{-1ex}
    \label{fig:style}
\end{figure}

%============================================================
% \vspace{-2ex}
\section{Experiments}
\label{sec:exps}
% \vspace{-2ex}

\noindent\textbf{Metrics:}
Evaluating stylized synthesis is challenging due to the subjective nature of style, making simple metrics inadequate. 
We follow a two step approach: first using metrics from prior works and then conducting human evaluation.
% We take a multi-pronged approach, using the same approaches championed in recent prior work, and in addition, conducting user studies.
To evaluate prompt-image alignment, we use CLIP-T score~\cite{stylealigned,styledrop,instantstyle} and ImageReward~\cite{imagereward}, which also consider human aesthetics, distortions, and object completeness.
When a style description is provided, CLIP-T and ImageReward also capture style alignment.
We assess style similarity using DINO~\cite{dino} and content similarity using CLIP-I~\cite{clip} as in prior work~\cite{stylealigned, dreambooth, styledrop}, and highlight their limitations in disentangling style and content performance in evaluation.
Given the importance of human evaluation in T2I personalization~\cite{stylealigned,styledrop,dreambooth,ziplora,ssa}, we also conduct a user study though Amazon Mechanical Turk to measure both style and text alignment.

\noindent\textbf{Datasets and baselines:}
We use style images from StyleAligned benchmark~\citep{stylealigned} for stylization and content images from DreamBooth \cite{dreambooth} for content-style composition.
We base RB-Modulation on the recently released StableCascade~\cite{sc}.
We compare our approach with three training-free methods: InstantStyle \citep{instantstyle} (state-of-the-art), IP-Adapter \citep{ipadapter}, and StyleAligned \citep{stylealigned}. For completeness, we also
compare with training-based methods StyleDrop \citep{styledrop} and ZipLoRA \citep{ziplora}.

\noindent\textbf{Implementation details:}
All experiments run on a single A100 NVIDIA GPU. 
We use the same hyper-parameters for our method across tasks, and default settings for alternative methods as per their original papers. 
Details are provided in Appendix~\ref{sec:addn-imple-details}.

%==================================================================================================================
\vspace{-1.5ex}
\subsection{Image Stylization}
\vspace{-1.5ex}
\label{sec:exp-stylization}

% \subsubsection{Overall performance}

\noindent\textbf{Qualitative analysis:} 
This section describes image stylization experiments using a text prompt and a reference style image. 
Figure~\ref{fig:style} compares our method with SoTA \textbf{training-free} InstantStyle \cite{instantstyle} and StyleAligned \cite{stylealigned}, and \textbf{training-based} StyleDrop \cite{styledrop}.
Except for StyleDrop, which requires $\sim$5 minutes of training per style, all methods, including ours, are training-free and complete inference in $<$1 minute.
While all methods produce reasonable outputs, alternative methods encounter issues with information leakage.
For instance, in the third row of Figure~\ref{fig:style}, StyleAligned and StyleDrop generate a wine bottle and book resembling the smartphone in the reference style image.
In the last row, StyleAligned leaks the house and the background of the reference image; InstantStyle exhibits color leakage from the house, resulting in similar-colored images.
Our method accurately adheres to the prompt in the desired style. 
As illustrated in the second and the third row, our method generates only one glass of wine and a high-fidelity rubber duck, compared to baselines where extra items appear (wine bottles styled like the left smartphone) or incorrect styles (cartoon-style rubber duck).

\colorlet{LightGreen}{SpringGreen!70}
\colorlet{DarkGreen}{ForestGreen!70}
\begin{table}[!t]
    \centering
    \small % Adjusts the font size to make the table fit better
    %\begin{tabular}{c|c|c|c|c|c|c|c|c|c}
    \begin{tabu} to \textwidth {@{}l@{\hfill}*{9}{X[c]}@{}}
        \toprule
        Human & \multicolumn{3}{@{}c@{}}{\textbf{Ours \vs~InstantStyle} \cite{instantstyle}} & \multicolumn{3}{c}{\textbf{Ours \vs~StyleAligned} \cite{stylealigned}} & \multicolumn{3}{c}{\textbf{Ours \vs~IP-Adapter} \cite{ipadapter}} \\
        % & Overall & Style alignment & Prompt alignment & Overall & Style alignment & Prompt alignment & Overall & Style alignment & Prompt alignment \\ \hline
        Preference (\%) & OQ $\uparrow$ & SA $\uparrow$& PA $\uparrow$& OQ $\uparrow$& SA $\uparrow$& PA $\uparrow$& OQ $\uparrow$& SA $\uparrow$& PA $\uparrow$\\ 
        \midrule
        \textbf{Alternative} & 39.8 & 38.5 & 39.5  & 24.4  & 27.8 & 29.4  &  8.1  & 20.1  &  8.3 \\
        \textbf{Tie}      &  9.3 & 6.4  &  7.3  &  8.8  &  7.1 &  5.8  &  6.9  &  4.8  &  4.5 \\ 
        \textbf{RB-Modulation} (ours)     & {\bf 51.0} & {\bf 55.1} & {\bf 53.3}  & {\bf 66.9}  & {\bf 65.1} & {\bf 64.9}  & {\bf 85.0}  & {\bf 75.1}  & {\bf 87.2} \\
        \bottomrule
    \end{tabu}
    %\end{tabular}
    \vspace{0.5ex}
    \caption{\textbf{User study:}
    We report the \% of human preference on ours \vs~alternatives for overall quality (OQ), style alignment (SA), and prompt alignment (PA), including ties where users couldn't decide.
    Our method consistently outperforms alternatives, achieving higher scores in all metrics. 
    }
    \label{tab:user-study}
    \vspace{-4ex}
\end{table}

\noindent\textbf{User study:}
To validate the qualitative analysis, we conduct a user study on Amazon Mechanical Turk with 155 participants using 100 styles from the StyleAligned dataset \cite{stylealigned}, collecting a total of 7,200 answers (8 responses for each question).
Each user answers 3 questions comparing our method with an alternative method regarding (1) overall quality, (2) style alignment, and (3) prompt alignment (details in the Appendix~\ref{sec:addn-user-study}).
Table~\ref{tab:user-study} summarizes the percentage of human preferences for our method, the alternative method, or a tie. 
Our method consistently outperforms the alternatives, including the most competitive method InstantStyle~\cite{instantstyle} in style alignment. 
The preference rates over all three metrics highlight the effectiveness of our method RB-Modulation.

\noindent\textbf{Quantitative analysis:} 
Table~\ref{tab:style-quant} evaluates 300 prompts and 100 styles on the StyleAligned dataset~\cite{stylealigned} using three metrics, with and without style descriptions in the prompts. 
Our method outperforms others notably in the ImageReward metric, closely matching human aesthetics assessment from the user study in Table \ref{tab:user-study}.
In addition, the CLIP-T score indicates
our effective alignment between generated images and text prompts. 
While IP-Adapter and StyleAligned have higher DINO scores, their lower rating in ImageReward, CLIP-T and user preference expose information leakage from the reference style images. 
Nevertheless, our DINO score remains competitive with the leading method InstantStyle.
Notably, all metrics show improvement with style descriptions, particularly in ImageReward, where leveraging style descriptions enhances prompt alignment. 
Our method achieves high ImageReward and CLIP-T score even without style descriptions, suggesting robustness in prompt alignment without explicit style information in the prompt.

\begin{figure}[!t]
    \centering
    \includegraphics[width=\textwidth]{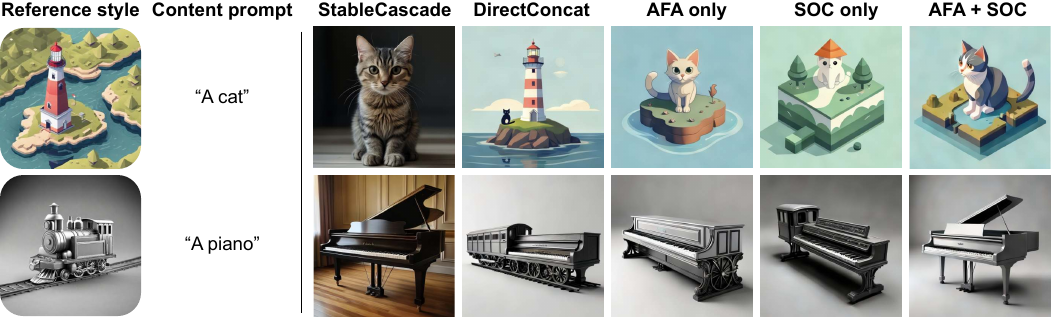}
    %\vspace{-1ex}
    \caption{{\bf Ablation study:}
    Our method builds on any transformer-based diffusion model.
    In this case, we use StableCascade \cite{sc} as the foundation, and sequentially add each module to show their effectiveness. 
    DirectConcat involves concatenating reference image embeddings with prompt embeddings. 
    Style descriptions are excluded in this ablation study.
    }
    \label{fig:ablation}
    \vspace{-2ex}
\end{figure}

\noindent\textbf{Ablation Study:}
Figure~\ref{fig:ablation} shows an ablation study of the AFA and SOC modules
% integrated individually into 
adding new capabilities to StableCascade~\cite{sc}.
We include a baseline, ``DirectConcat'', which concatenates reference style embeddings with text embeddings in the cross-attention modules.
DirectConcat mixes both embeddings, making it less effective in disentangling style from prompts (\eg, cat \vs~lighthouse). 
While AFA or SOC alone mitigates this by modulating the reverse drift and attention modules (\S\ref{sec:method}), each has drawbacks. 
AFA alone fails to capture the cat's style accurately, and SOC alone misplaces elements, like ``a lighthouse hat on the cat'' and ``a railroad trunk on a piano''. 
%Combining AFA and SOC significantly improves results. 
We observe consistent improvements with each module, with the best results when combined. 
Quantitative analysis is omitted due to the lack of suitable metrics for information leakage, as detailed in Appendix~\ref{sec:addn-bad-metric}.

\begin{table}[!t]
    \centering
    \small 
    %\begin{tabular}{c|c|c|c|c|c|c}
    \begin{tabu} to \textwidth {@{}l@{\hspace{10pt}}*{6}{X[c]}@{}}
        \toprule
        & \multicolumn{2}{c}{\textbf{ImageReward} $\uparrow$} & \multicolumn{2}{c}{\textbf{CLIP-T score} $\uparrow$} & \multicolumn{2}{c}{\textbf{DINO score}} \\
        With style description? & No & Yes & No & Yes & No & Yes \\ 
        \midrule
        \textbf{IP-Adapter} \cite{ipadapter}      & -1.99 & -1.51 & 0.21 & 0.26 & 0.89 & 0.89 \\ 
        \textbf{StyleAligned} \cite{stylealigned} & -0.68 &  0.01 & 0.26 & 0.31 & 0.80 & 0.85 \\ 
        \textbf{InstantStyle} \cite{instantstyle} &  0.09 &  0.72 & 0.29 & 0.33 & 0.68 & 0.72 \\ 
        % \rowcolor{Orange!10}
        \textbf{RB-Modulation} (ours)             &  0.91 &  1.18 & 0.30 & 0.34 & 0.68 & 0.73 \\ 
        \bottomrule
    \end{tabu}
    %\end{tabular}
    \vspace{0.5ex}
    \caption{\textbf{Quantitative results for stylization:} 
    We compare alternative methods on three metrics: ImageReward \cite{imagereward} and CLIP-T \cite{clip} for prompt alignment, DINO \cite{dino} for style alignment.
    Note that DINO score does not capture information leakage,
    so higher scores are not necessarily better (\S\ref{sec:addn-bad-metric}). }
    \label{tab:style-quant}
    \vspace{-3ex}
\end{table}

\begin{figure}[t]
    \centering
    \includegraphics[width=\textwidth]{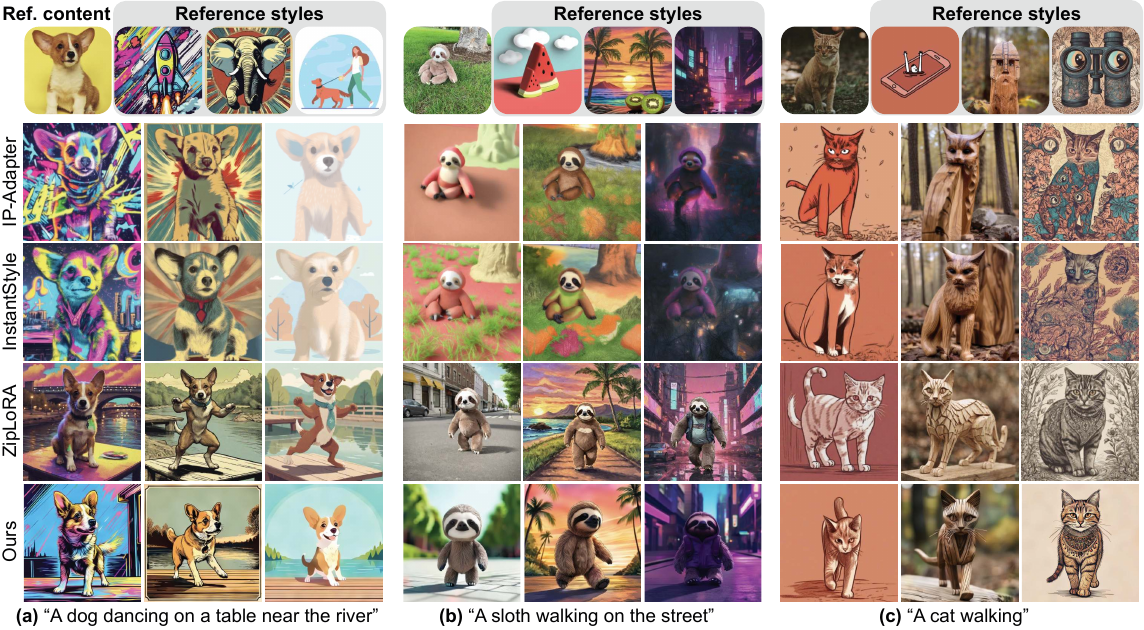}
    \vspace{-3.5ex}
    \caption{{\bf Qualitative results for content-style composition:} 
    Our method shows better prompt alignment and greater diversity than training-free methods IP-Adapter~\cite{ipadapter} and InstantStyle~\cite{instantstyle}, and have competitive performance with training-based ZipLoRA~\cite{ziplora} .
    }
    \label{fig:composition}
    \vspace{-1ex}
\end{figure}

%=================================================================================
% \vspace{-1.5ex}
\subsection{Content-Style Composition}
\label{sec:exp-composition}
% \vspace{-1.5ex}
\noindent\textbf{Qualitative analysis:}
Content-style composition aims to preserve the essence of both content and style depicted in the reference images, while ensuring the resulting image aligns with a given text prompt.
Figure~\ref{fig:composition} compares our method against {\bf training-free} InstantStyle~\cite{instantstyle}, IP-Adapter~\cite{ipadapter}, and {\bf training-based} ZipLoRA~\cite{ziplora}.
Notably, the training-free InstantStyle and IP-Adapter rely on ControlNet \cite{controlnet}, which often constrains their ability to accurately follow prompts for changing the pose of the generated content, such as illustrating ``dancing'' in Figure~\ref{fig:composition}(b), or ``walking'' in (c).
In contrast, our method avoids the need for ControlNet or adapters, and can effectively capture the distinctive attributes of both style and content images while adhering to the prompt to generate diverse images.
In Figure~\ref{fig:composition}(a), our method accurately captures elements like ``table'' and ``river'' that are overlooked in InstantStyle and IP-Adapter.
In addition, our method mitigates information leakage, as evidenced in Figure~\ref{fig:composition}(b), where the trunk of the tree behind the sloth is erroneously captured by InstantStyle and IP-Adapter but not by ours.
Compared to ZipLoRA \cite{ziplora} that requires training of 12 LoRAs \citep{lora} and additional merge layers for each composition, our method requires no training at all while yielding competitive or better results.
For instance, our method effectively captures the 2D cartoon and 3D rendering styles as illustrated in Figures~\ref{fig:composition}(a) and (b).

 \begin{table}[t!]
    \centering
    \small 
    %\begin{tabular}{c|c|c|c|c|c|c}
    \begin{tabu} to \textwidth {@{}l@{\hspace{10pt}}*{4}{X[c]}@{}}
        \toprule
        & \multicolumn{1}{c}{\textbf{ImageReward} $\uparrow$} & \multicolumn{1}{c}{\textbf{CLIP-T score} $\uparrow$} & \multicolumn{1}{c}{\textbf{DINO score}} &
        \multicolumn{1}{c}{\textbf{CLIP-I score}} \\
        \midrule
        \textbf{IP-Adapter} \cite{ipadapter}     & -0.78 & 0.22 & 0.73 & 0.68  \\ 
        \textbf{InstantStyle} \cite{instantstyle} & -0.54 & 0.21 & 0.71 & 0.71  \\ 
        % \rowcolor{Orange!10}
        \textbf{RB-Modulation} (ours)             &  0.74 & 0.26 & 0.74 & 0.71  \\ 
        \bottomrule
    \end{tabu}
    %\end{tabular}
    \vspace{0.5ex}
    \caption{\textbf{Quantitative results for composition:} 
    In addition to the metrics used for stylization, we use CLIP-T score~\cite{clip} to evaluate content alignment with the reference content. Similar to DINO, CLIP-I could inflate test score~\cite{styledrop, ziplora} by capturing content leakage, but not necessarily preferred by users; higher DINO and CLIP-I scores do not mean better human preference.
    }
    \label{tab:compose-quant}
    \vspace{-5ex}
\end{table}

\noindent\textbf{Quantitative analysis:}
Table~\ref{tab:compose-quant} shows quantitative evaluation using 50 styles from StyleAligned dataset~\cite{stylealigned} and 5 contents from DreamBooth dataset~\cite{dreambooth}. 
Unlike prior works \cite{stylealigned,styledrop,ziplora,dreambooth,ssa} reporting either DINO and CLIP-I scores, we present both metrics and demonstrate comparable performance across them.
Additionally, we obtain notably higher ImageReward score, which aligns closely with human aesthetics assessment as evidenced in \S\ref{sec:exp-stylization} and \cite{imagereward}.
Consequently, we omitted a user study in this section.
For more details, please refer to Appendix~\ref{sec:addn-imple-details}.

%========+========+========+========+========+========+========+========+========+========+========+========+========+========+====
% \vspace{-2ex}
\section{Conclusion}
\label{sec:conc}
% \vspace{-2ex}
We introduced Reference-Based modulation (RB-Modulation), a training-free method for personalizing transformer-based diffusion models. 
RB-Modulation builds on concepts from stochastic optimal control to modulate the drift field of reverse diffusion dynamics, incorporating desired attributes (\eg, style or content) via a terminal cost. 
Our Attention Feature Aggregation (AFA) module decouples content and style in the cross-attention layers and enables precise control over both. 
In addition, we derived theoretical connections between linear quadratic control and the denoising diffusion process, which led to the creation of RB-Modulation. 
Empirically, our method outperformed current state-of-the-art methods in stylization and content+style composition.
To our best knowledge, this is the first training-free personalization framework using stochastic optimal control, 
which marks the departure from external adapters or ControlNets.

\noindent\textbf{Limitation:} 
We proposed a framework and demonstrated its efficacy by incorporating a style descriptor \cite{csd} in a pre-trained diffusion model~\cite{sc}.
The inherent limitations of the style descriptor or diffusion model might propagate into our framework.
We believe these limitations can be addressed by appropriate replacements of the descriptor or generative prior in a plug-and-play manner.

{\bf Acknowledgements:}
This research has been supported by NSF Grant 2019844, a Google research collaboration award, and the UT Austin Machine Learning Lab. 
Litu Rout has been supported by Ju-Nam and Pearl Chew Presidential Fellowship and George J.~Heuer Graduate Fellowship.

\printbibliography

\newpage
\appendix
\section{Additional Theoretical Results}
\label{sec:addn-theory}
In this section, we restate the propositions more precisely and provide their technical proofs. First, we recall standard terminologies from optimal control literature~\cite{fleming2012deterministic}. For $0\leq t_0\leq t\leq t_N\leq 1$, the cost function associated with the controller $u(\cdot)$ is defined by the integral:
\begin{align}
    \label{eq:oc-cost}
    V(u;\vx_0,t_0) = \int_{t_0}^{t_N} \ell\left(X_t^u, u, t \right) dt + h\left(X_{t_N}^u\right), ~~~X_{t_0}^u = \vx_0,
\end{align}
where $\ell(\cdots)$ denotes a scalar valued function of the state $X_t^u$, controller $u(\cdot)$, and instantaneous time $t$. The value function $V^*(\vx_0,t_0)$ is defined as the minimum value of $V(u;\vx_0,t_0)$ over the set of admissible controllers $\gU$, i.e.,
\begin{align}
    \label{eq:oc-val}
    V^*=V^*(\vx_0,t_0) = \min_{u\in \gU} V(u;\vx_0,t_0) = \min_{u\in \gU} \int_{t_0}^{t_N} \ell\left(X_t^u, u, t \right) dt + h\left(X_{t_N}^u\right), ~~~X_{t_0}^u = \vx_0,
\end{align} 
which satisfies a Partial Differential Equation (PDE) given below in \textbf{Theorem~\ref{thm:hjb}}.
\begin{theorem}[HJB Equation, \cite{fleming2012deterministic,basar2020lecture}]
\label{thm:hjb}
If $V^*$ has continuous partial derivatives, then it must satisfy the following PDE, also known as Hamilton-Jacobi-Bellman (HJB) equation:
\begin{align*}
    -\frac{\partial V^*}{\partial t} \left(\vx,t\right) 
    =\min_{u \in \gU} \left[ H\left(\vx, \nabla_\vx V^*\left(\vx,t\right), u,t \right) \coloneqq  \ell\left(\vx, u,t \right) + \left(\nabla_\vx V^*\left(\vx,t\right)\right)^T v\left(\vx, u, t\right) \right].
\end{align*}
Also, the Hamiltonian $H\left(\vx, \nabla_\vx V^*\left(\vx,t\right), u,t \right)$, optimal controller $u^*(t)$ and the state trajectory $\vx^*(t)$ must satisfy
\begin{align*}
    \min_{u \in \gU} H\left(\vx^*(t), \nabla_\vx V^*\left(\vx^*(t),t\right), u,t \right) = H\left(\vx^*(t), \nabla_\vx V^*\left(\vx^*(t),t\right), u^*(t),t \right).
\end{align*}
\end{theorem}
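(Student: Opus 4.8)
The plan is to derive the HJB equation by the dynamic programming principle (Bellman's principle of optimality), applied to the deterministic controlled dynamics $\deriv X_t^u = v(X_t^u, u, t)\,\deriv t$ fixed in the problem setup. The first step is to establish, for any $\Delta t > 0$ with $t + \Delta t \le t_N$, the recursion obtained by splitting the cost integral defining $V^*$ in \eqref{eq:oc-val} at the intermediate time $t + \Delta t$ and optimizing over the control on $[t, t+\Delta t]$ first:
\begin{align*}
    V^*(\vx, t) = \min_{u \in \gU} \left[ \int_t^{t+\Delta t} \ell(X_s^u, u, s)\,\deriv s + V^*\!\left(X_{t+\Delta t}^u, t+\Delta t\right) \right].
\end{align*}
This identity follows from the additivity of the cost functional over $[t, t+\Delta t]$ and $[t+\Delta t, t_N]$ together with the definition of $V^*$ as the optimal cost-to-go: an optimal tail policy launched from the state $X_{t+\Delta t}^u$ attains $V^*$ at that point, so only the behaviour on the short interval remains to be optimized.

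The second step is a first-order expansion in $\Delta t$. Because $V^*$ is assumed to have continuous partial derivatives and the ODE gives $X_{t+\Delta t}^u = \vx + v(\vx, u, t)\,\Delta t + o(\Delta t)$, a Taylor expansion produces
\begin{align*}
    V^*\!\left(X_{t+\Delta t}^u, t+\Delta t\right) = V^*(\vx, t) + \frac{\partial V^*}{\partial t}(\vx, t)\,\Delta t + \left(\nabla_\vx V^*(\vx, t)\right)^T v(\vx, u, t)\,\Delta t + o(\Delta t),
\end{align*}
while the running cost contributes $\ell(\vx, u, t)\,\Delta t + o(\Delta t)$. Substituting both into the recursion, cancelling the common $V^*(\vx, t)$ on each side, moving $\partial V^*/\partial t$ outside the minimization (it is free of $u$), dividing by $\Delta t$, and letting $\Delta t \to 0$ yields
\begin{align*}
    -\frac{\partial V^*}{\partial t}(\vx, t) = \min_{u \in \gU} \left[ \ell(\vx, u, t) + \left(\nabla_\vx V^*(\vx, t)\right)^T v(\vx, u, t) \right],
\end{align*}
which is exactly the claimed PDE once the bracketed quantity is named the Hamiltonian $H(\vx, \nabla_\vx V^*(\vx, t), u, t)$.

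The third claim is then immediate: evaluating the Hamiltonian along the optimal trajectory $\vx^*(t)$, the control $u^*(t)$ that realizes the minimum of $H$ is by definition the optimal control, so the minimized and the $u^*$-evaluated Hamiltonians coincide. The step I expect to be the main obstacle is making the first two steps rigorous rather than merely formal: the principle of optimality requires attainment of the infimum over $\gU$ (or a measurable-selection argument), and the passage $\Delta t \to 0$ requires the $o(\Delta t)$ remainders to be uniform in $u$, which in turn needs local boundedness and continuity of $\ell$ and $v$. Since the theorem is stated under the hypothesis that $V^*$ is $C^1$ and is quoted from \cite{fleming2012deterministic, basar2020lecture}, I would present the formal dynamic-programming derivation and defer these regularity technicalities to those references.
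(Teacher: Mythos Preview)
The paper does not supply its own proof of this theorem: it is stated as a classical result and attributed directly to \cite{fleming2012deterministic,basar2020lecture}, with no accompanying argument in the text. Consequently there is nothing in the paper to compare your proposal against at the level of proof strategy.

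That said, your proposal reproduces the standard dynamic-programming derivation that appears in those very references: split the cost at $t+\Delta t$, invoke Bellman's principle of optimality to get the recursion, Taylor-expand using the $C^1$ hypothesis on $V^*$ and the ODE dynamics, and pass to the limit $\Delta t\to 0$. This is exactly the argument the paper is implicitly deferring to, and your identification of the technical caveats (attainment of the infimum over $\gU$, uniformity of the $o(\Delta t)$ remainder in $u$) is appropriate and matches how the cited texts handle the result. There is no gap in your outline beyond the regularity technicalities you already flag and defer.
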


\subsection{Interpreting reverse-SDE as a solution to optimal control}
\label{sec:gm-oc}
For clarity, we restate the problem setup here and describe the main ideas from \S\ref{sec:method} in more details.
\noindent\textbf{Problem setup:} 
We discuss a standard approach to derive the optimal controller in a special case of our control problem~\eqref{eq:socp-1}. 
We substitute $t\gets 1-t$ to account for the time reversal in the reverse-SDE~\eqref{eq:dm-rev}.
In this setup, $X^u_0\sim \gN\left(0,\mathrm{I}_d\right)$ and $X^u_1 \sim p_{data}$.
We consider the following dynamic without the Brownian motion:
% $\deriv X_t^u = v(X_t^u, u, t) \deriv t, X_{t_0}^u = \vx_0,$
\begin{align}
    \label{eq:oc-dynamics-0}
  \deriv X_t^u = v(X_t^u, u, t) \deriv t, ~~~ X_{t_0}^u = \vx_0,
\end{align}
where $0\leq t_0 \leq t \leq t_N \leq 1$ and $v:\R^d \times \R^d \times [t_0,t_N] \rightarrow \R^d$ denotes the drift field. 
The optimal controller $u^*$ can be derived by solving the Hamilton-Jacobi-Bellman (HJB) equation~\cite{fleming2012deterministic,basar2020lecture},
see Appendix~\ref{sec:addn-theory} for details. 
By certainty equivalence, the same $u^*$ applies to a more general case with the Brownian motion~\cite{bridge}, where 
% $\deriv X_t^u = v(X_t^u, u, t) \deriv t + \deriv W_t, X_{t_0}^u = \vx_0.$
\begin{align}
    \label{eq:oc-dynamics-1}
  \deriv X_t^u = v(X_t^u, u, t) \deriv t + \deriv W_t, ~~~ X_{t_0}^u = \vx_0.
\end{align}
Therefore, without loss of generality, we analyze the reverse dynamic in the absence of the Brownian motion, and employ the same controller in more general cases with the Brownian motion.

Below, we consider a dynamical system whose drift field is chosen to minimize a transient trajectory cost and a terminal cost (weighted by $\gamma$) that enforces ``closeness'' to reference content $x_1$. \textbf{Proposition~\ref{prop:oc-bridge}} provides the structure of the optimal control 
in the limiting setting where $\gamma \rightarrow \infty$. Furthermore, suppose we replace $x_1$ with its conditional expectation (discussed in Remark~\ref{rmk:gm-oc}), the resulting dynamic, interestingly, is the standard reverse-SDE for the Orstein-Uhlenbeck (OU) diffusion process. This connection between optimal control (more precisely, classic Linear Quadratic Control) and the standard reverse-SDE provides us a path to study other diffusion problems (\eg personalization~\citep{dreambooth,stylealigned,styledrop,instantstyle}, image editing or inversion~\citep{nti,indi,psld,stsl,rout2023theoretical}) through the lens of stochastic optimal control.

\begin{proposition}[Linear optimal control with quadratic cost~\citep{bridge}]
\label{prop:oc-bridge}
Consider the control problem:
\begin{align*}
\label{eq:gm-oc-0}
    &\min_{u \in\gU} \int_{t_0}^{1} \frac{1}{2} \left\|u(X^u_t,t)\right\|^2 dt + \frac{\gamma}{2} \left\|X^u_1 - x_1\right\|_2^2,
    \\
    &
    \text{where }\deriv X^u_t = u(X^u_t,t)\,\deriv t,~~~~~ X^u_{t_0}=x_0
\end{align*}
Then, in the limit when $\gamma\rightarrow\infty$, the optimal controller is given by $u^* = \frac{x_1-X^u_t}{1-t}$, which yields $\deriv X^u_t = \frac{x_1-X^u_t}{1-t} \deriv t$ for the deterministic case and $\deriv X^u_t = \frac{x_1-X^u_t}{1-t} \deriv t + \deriv W_t$ for the stochastic case.
\end{proposition}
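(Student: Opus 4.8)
The plan is to solve the HJB equation from Theorem~\ref{thm:hjb} for this linear-quadratic problem in closed form, then take the limit $\gamma \to \infty$. Here the drift is $v(X^u_t, u, t) = u$, the transient cost is $\ell(\vx, u, t) = \tfrac12\|u\|^2$, and the terminal cost is $h(\vx) = \tfrac{\gamma}{2}\|\vx - x_1\|_2^2$ imposed at $t_N = 1$. First I would write the Hamiltonian $H = \tfrac12\|u\|^2 + (\nabla_\vx V^*)^T u$ and minimize over $u \in \R^d$: the first-order condition gives the feedback law $u^*(\vx,t) = -\nabla_\vx V^*(\vx,t)$. Substituting back yields the HJB PDE $-\partial_t V^* = -\tfrac12\|\nabla_\vx V^*\|^2$, i.e. $\partial_t V^* = \tfrac12\|\nabla_\vx V^*\|^2$, with terminal data $V^*(\vx,1) = \tfrac{\gamma}{2}\|\vx - x_1\|^2$.

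Next I would make the standard quadratic ansatz $V^*(\vx,t) = \tfrac{a(t)}{2}\|\vx - x_1\|_2^2 + c(t)$ for scalar functions $a,c$, which is natural since the terminal cost is quadratic and isotropic and the dynamics are linear. Then $\nabla_\vx V^* = a(t)(\vx - x_1)$, and plugging into the PDE gives the scalar Riccati-type ODE $\dot a(t) = \tfrac12 a(t)^2$ (matching the $\|\vx - x_1\|^2$ coefficients), plus $\dot c = 0$; the terminal condition is $a(1) = \gamma$. Solving $\dot a = \tfrac12 a^2$ by separation of variables gives $a(t) = \dfrac{1}{\tfrac{1}{\gamma} + \tfrac{1-t}{2}} = \dfrac{2\gamma}{2 + \gamma(1-t)}$. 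Hence the optimal controller is
\begin{align*}
u^*(\vx,t) = -\nabla_\vx V^*(\vx,t) = -a(t)(\vx - x_1) = \frac{2\gamma\,(x_1 - \vx)}{2 + \gamma(1-t)}.
\end{align*}
Taking $\gamma \to \infty$, the coefficient $a(t) \to \dfrac{2}{1-t}$... wait—I should double-check the normalization against the claimed answer $u^* = \dfrac{x_1 - X^u_t}{1-t}$; the discrepancy of a factor $2$ suggests the transient cost should be read with the $\tfrac12\|u\|^2$ convention producing $\dot a = a^2$ rather than $\tfrac12 a^2$, or equivalently that the Hamiltonian minimization and PDE substitution must be tracked with care. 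I would recompute: with $H = \tfrac12 u^2 + pu$ minimized at $u^* = -p$ giving $H^* = -\tfrac12 p^2$, the HJB reads $\partial_t V^* = \tfrac12 |\nabla V^*|^2$, so with the ansatz coefficient $a$ one gets $\tfrac12 \dot a \|\vx-x_1\|^2 = \tfrac12 a^2 \|\vx - x_1\|^2 \cdot \tfrac12$? No: $\tfrac12|\nabla V^*|^2 = \tfrac12 a^2\|\vx-x_1\|^2$, while $\partial_t V^* = \tfrac12\dot a\|\vx-x_1\|^2$, giving $\dot a = a^2$, hence $a(t) = \dfrac{1}{\tfrac1\gamma + (1-t)} \to \dfrac{1}{1-t}$ as $\gamma\to\infty$, and $u^* = -a(t)(\vx - x_1) \to \dfrac{x_1 - \vx}{1-t}$, as claimed.

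Finally, I would substitute this controller into the dynamics: the deterministic case gives $\deriv X^u_t = \dfrac{x_1 - X^u_t}{1-t}\deriv t$ directly, and the stochastic case $\deriv X^u_t = \dfrac{x_1 - X^u_t}{1-t}\deriv t + \deriv W_t$ follows by the certainty-equivalence principle for linear-quadratic problems invoked around \eqref{eq:oc-dynamics-1} (the optimal feedback law is unchanged by additive Brownian noise). The main obstacle I anticipate is purely bookkeeping: getting the Riccati ODE normalization exactly right (the factor-of-two issue above) and justifying the interchange of limit and solution — specifically that $a(t)\to 1/(1-t)$ uniformly on compact subsets of $[t_0,1)$ and that the limiting feedback indeed solves the $\gamma=\infty$ (hard-constraint) problem where $X^u_1 = x_1$ is enforced exactly. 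A clean way to handle the limit is to note that for finite $\gamma$ the terminal state is $X^u_1 = x_1 + \dfrac{2/\gamma}{2/\gamma}(\cdots)$—more simply, that as $\gamma\to\infty$ the trajectory is pinned to $x_1$ at $t=1$, recovering the Brownian-bridge/pinned-diffusion drift, which is exactly the stated $\dfrac{x_1-X^u_t}{1-t}$.
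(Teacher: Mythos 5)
Your derivation is correct, but it follows a different route from the paper's. The paper (see the proof pattern of Proposition~\ref{thm:per-oc-appendix} in Appendix~\ref{sec:per-oc}, which Proposition~\ref{prop:oc-bridge} specializes) argues via Pontryagin's minimum principle: it forms the Hamiltonian, obtains $u^*=-\vp_t$, writes the coupled state--costate ODEs $\dot{\vx}_t=-\vp_t$, $\dot{\vp}_t=0$ with boundary data $\vx_{t_0}=x_0$, $\vp_1=\gamma(\vx_1-x_1)$, solves the resulting two-point boundary value problem for the constant costate $\vp=\frac{\gamma(x_0-x_1)}{1+\gamma(1-t_0)}$, and then lets $\gamma\to\infty$, re-reading the open-loop answer as the feedback $\frac{x_1-\vx_t}{1-t}$ by treating each $(t,\vx_t)$ as a fresh initial condition. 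You instead solve the HJB equation of Theorem~\ref{thm:hjb} directly with the quadratic ansatz $V^*(\vx,t)=\frac{a(t)}{2}\|\vx-x_1\|^2+c(t)$, reduce to the scalar Riccati equation $\dot a=a^2$, $a(1)=\gamma$, obtain $a(t)=\bigl(1/\gamma+(1-t)\bigr)^{-1}\to\frac{1}{1-t}$, and read off the closed-loop law $u^*=-\nabla_\vx V^*=\frac{x_1-\vx}{1-t}$; your final answer and value function $\frac{\gamma\|x_0-x_1\|^2}{2(1+\gamma(1-t_0))}$ agree exactly with what the paper's costate computation gives (your initial $\dot a=\tfrac12 a^2$ was a transient slip that you correctly caught and fixed in the same passage, so the recorded argument is sound). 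What each buys: the dynamic-programming/Riccati route yields the value function and a genuinely feedback controller at once, making the $\gamma\to\infty$ limit a pointwise limit of $a(t)$ and making certainty equivalence almost self-verifying (with additive noise the stochastic HJB only adds a Laplacian term, which under the quadratic ansatz perturbs $c(t)$ but not $a(t)$, hence not $u^*$); the paper's minimum-principle route avoids positing an ansatz and extends more mechanically to the variants it needs later, such as the linear style extractor $A$ in Propositions~\ref{thm:per-oc} and~\ref{thm:per-ocs}, where the costate ODEs change form but the same three-step recipe applies. Your closing remark on justifying the limit (uniform convergence of $a$ on compacts of $[t_0,1)$ and recovering the pinned/bridge drift) is a point the paper glosses over as well, so nothing essential is missing.
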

The optimal controller for the problem presented in \textbf{Proposition~\ref{prop:oc-bridge}} can be derived using established techniques from control theory \citep{fleming2012deterministic, basar2020lecture, kappen2008stochastic}; the specific form of the above result follows from \citep{bridge} (but without their momentum term). The key steps in this derivation include: (1) computing the Hamiltonian, (2) applying the minimum principle theorem to derive a set of differential equations, and (3) taking the limit as $\gamma \rightarrow \infty$. 
These three steps are fundamental in deriving a closed-form solution.
The final step is critical for satisfying hard terminal constraint and is essential for the practical implementation of \textbf{Algorithm~\ref{alg:rbm-small}} and \textbf{Algorithm~\ref{alg:rbm-large}}, as detailed in \S\ref{sec:method}.

For generative modeling, the controlled dynamics described in \textbf{Proposition~\ref{prop:oc-bridge}} cannot be directly applied. 
This limitation arises because the optimal control $u^*$ depends on the terminal state $x_1$, making it non-causal or reliant on future information.
Inspired by recent advancements in flow-based generative models \citep{lipman2022flow, rectflow}, we make the optimal controller causal by replacing the terminal state with its conditional expectation given the current state, i.e., , \ie $x_1\gets \E[X^u_1|X^u_t = \vx_t]$.
This modification results in a controlled dynamic that can be simulated to produce a generative model incorporating principles from optimal control, as elaborated in \textbf{Remark~\ref{rmk:gm-oc}}.

\begin{remark}[Connections between diffusion-based generative modeling and stochastic optimal control]
\label{rmk:gm-oc}
Following conditional diffusion models and optimal transport paths~\citep{lipman2022flow,rectflow}, where $X^f_{t} = t X^f_0 + (1-t) \epsilon$, the state variable $X^u_t$ is equal in distribution to $X^f_{1-t} = (1-t) X^f_0 + t \epsilon, \, \epsilon \sim \gN\left(0,\mathrm{I}_d\right)$ after time reversal.
Now, we use Tweedie's formula~\cite{efron2011tweedie} to compute the posterior mean:
\begin{align}
\E\Big[X^u_{1}|X^u_t\Big] = \frac{X^u_t}{1-t} + \frac{t^2}{1-t} \nabla \log p\left(X^u_t,1-t\right).  
\end{align}
Substituting the posterior mean in the controlled reverse dynamic of \textbf{Proposition~\ref{prop:oc-bridge}}, we arrive at
\begin{align*}
    \deriv X^u_t &= \frac{\left(\E\Big[X^u_{1}|X^u_t\Big] - X^u_t \right)}{\left(1-t \right)} \deriv t + \deriv W_t\\
    & 
    = \Bigg[\frac{t}{(1-t)^2}X^u_t  + \frac{t^2}{(1-t)^2} \nabla \log p(X^u_t,1-t)\Bigg] \deriv t + \deriv W_t.
\end{align*}
\end{remark}

We observe that the above equation is structurally the same as reverse-SDE associated with a forward Orstein-Uhlenbeck (OU) diffusion process. This relation between diffusion-based generative models and optimal control is further explored in the Appendices below.

Indeed, diffusion models \citep{ddpm, songscore, ldm, sdxl, sc} provide an effective approximation to the terminal state of a denoising process. This approximation has been used for a variety of generative modeling tasks. 
% When the reverse process is represented by an SDE \eqref{eq:dm-rev}, 
Also, the terminal state can be approximated using Tweedie's formula \citep{efron2011tweedie} with a learned score function \citep{ddpm}
\footnote{Alternatively, when the reverse process is described by a probability flow ODE, a trained neural network can directly predict the terminal state \citep{ddim}.}. 
By utilizing these pre-trained diffusion models, we can employ the connection to optimal control as discussed above to develop practically implementable generative models that incorporates  terminal objectives such as style and personalization. Consequently, the subsequent sections are dedicated to deriving the optimal controller assuming a known terminal state; we will approximate this in practice using Tweedie's formula as above.

\subsection{Incorporating personalized style constraints through a terminal cost}
\label{sec:per-oc}
In this section, we derive the optimal controller when we have access to the reference {\em style features} $y_1$ at the terminal time (instead of the content of the image encoded through $x_1$).

\begin{proposition}
\label{thm:per-oc-appendix}
Suppose $A\in \R^{k\times d}$ be a linear style extractor that operates on the terminal state $X^u_1 \in \R^d$.
Given reference style features $y_1$, consider the control problem:
\begin{align}
    & \min_{u \in\gU} \int_{t_0}^{1} \frac{1}{2} \left\|u(X^u_t,t)\right\|^2 dt + \frac{\gamma}{2} \left\|AX^u_1 - y_1\right\|_2^2, 
    \label{eq:per-oc-0}\\
    & \text{where }\deriv X^u_t = u(X^u_t,t)\,\deriv t,~~~~~ X^u_{t_0}=x_0,
\end{align}
 Then, in the limit when $\gamma \rightarrow \infty$, the optimal controller $u^* = \frac{\left(A^T A\right)^{-1} A^T\left(y_1 - AX^u_t \right)}{1-t}$, which yields the following controlled dynamic:
 \begin{align}
 \label{eq:per-oc-1}
     \deriv X^u_t = \frac{\left(A^T A\right)^{-1} A^T\left(y_1 - AX^u_t \right)}{1-t} \deriv t.
 \end{align}
\end{proposition}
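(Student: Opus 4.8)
The plan is to follow the three-step recipe already outlined in the text for Proposition~\ref{prop:oc-bridge}, but now with the quadratic terminal cost $\tfrac{\gamma}{2}\|AX^u_1-y_1\|_2^2$ in place of $\tfrac{\gamma}{2}\|X^u_1-x_1\|_2^2$. Write $\vp_t \coloneqq \nabla_\vx V^*(\vx,t)$ and form the Hamiltonian of the control problem~\eqref{eq:per-oc-0}, namely $H(\vx_t,\vp_t,\vu_t,t)=\tfrac12\|\vu_t\|^2+\vp_t^T\vu_t$ (the drift is $v=u$, so the terminal cost enters only through the boundary condition). The pointwise minimizer is $\vu_t^*=-\vp_t$, exactly as in the proof of Proposition~\ref{prop:oc-bridge}, and the minimum-principle equations give $\tfrac{\deriv \vx_t}{\deriv t}=-\vp_t$, $\tfrac{\deriv \vp_t}{\deriv t}=0$, $\vx_{t_0}=x_0$, and the only new ingredient, the transversality condition $\vp_1=\nabla_\vx h(\vx_1)=\gamma A^T(A\vx_1-y_1)$.

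Next I would integrate. Since $\vp_t$ is constant in $t$, call it $\vp$; integrating $\tfrac{\deriv\vx_t}{\deriv t}=-\vp$ from $t_0$ to $1$ yields $\vx_1 = x_0 - (1-t_0)\vp$. Plugging this into the transversality condition gives a linear equation for $\vp$:
\begin{align*}
\vp = \gamma A^T\big(A x_0 - (1-t_0)\,A A^T\,\text{(something)} - y_1\big),
\end{align*}
so more carefully $\vp = \gamma A^T\big(Ax_0 - (1-t_0)AA^T\vp - y_1\big)$ — wait, $A\vx_1 = Ax_0 - (1-t_0)A\vp$, hence $\vp = \gamma A^T\big(Ax_0 - (1-t_0)A\vp - y_1\big)$, i.e. $\big(I + \gamma(1-t_0)A^TA\big)\vp = \gamma A^T(Ax_0 - y_1)$. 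This is where a little care is needed: $A$ need not be square, so I should argue that the relevant inverse exists (e.g. assume $A$ has full row rank $k$ so $AA^T$ is invertible, and restrict attention to the component of $\vp$ in the row space of $A$; note $\vp$ as defined lies in $\mathrm{range}(A^T)$ automatically). On $\mathrm{range}(A^T)$, take $\gamma\to\infty$: the term $\gamma(1-t_0)A^TA$ dominates, and one gets $\vp \to \tfrac{1}{1-t_0}(A^TA)^{\dagger}A^T(Ax_0 - y_1) = \tfrac{1}{1-t_0}(A^TA)^{-1}A^T(Ax_0-y_1)$, reading $(A^TA)^{-1}$ as the inverse on the appropriate subspace (or literally invertible if $A$ has full column rank). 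Then $\vu_{t_0}^* = -\vp = \tfrac{(A^TA)^{-1}A^T(y_1 - Ax_0)}{1-t_0}$, and running the same argument with $t_0$ replaced by a generic current time $t$ and $x_0$ by the current state $\vx_t$ gives the time-varying feedback form $u^* = \tfrac{(A^TA)^{-1}A^T(y_1 - A\vx_t)}{1-t}$, whence the controlled dynamic~\eqref{eq:per-oc-1} by substituting back into $\deriv X^u_t = u^*\,\deriv t$.

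The main obstacle I anticipate is not the computation — it mirrors Proposition~\ref{prop:oc-bridge} almost verbatim — but the linear-algebra bookkeeping around $A$ being rectangular: making precise in what sense $(A^TA)^{-1}$ is meant, checking that the limiting controller is consistent (i.e. that $u^*$ indeed drives $AX^u_1$ to $y_1$, which requires $y_1 - Ax_0 \in \mathrm{range}(A)$ or else interpreting convergence as best-approximation), and verifying that the feedback form $u^*(\vx_t,t)=\tfrac{(A^TA)^{-1}A^T(y_1-A\vx_t)}{1-t}$ is genuinely the closed-loop version of the open-loop optimum (this is the standard ``certainty equivalence / time-consistency'' observation for LQ problems, which I would invoke rather than re-derive). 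I would also remark, as the paper does after the statement, that certainty equivalence lets the same $u^*$ be used when a Brownian term $\deriv W_t$ is added to the dynamics.
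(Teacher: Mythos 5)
Your proposal is correct and follows essentially the same route as the paper's proof: Hamiltonian $\tfrac12\|\vu_t\|^2+\vp_t^T\vu_t$, minimum principle giving $\dot{\vx}_t=-\vp_t$, $\dot{\vp}_t=0$ with transversality $\vp_1=\gamma A^T(A\vx_1-y_1)$, integration to $\vx_1=x_0-(1-t_0)\vp$, solving the resulting linear equation for $\vp$, letting $\gamma\to\infty$, and then reading the result as a feedback law in $(\vx_t,t)$, with certainty equivalence handling the Brownian case. Your extra care about $A$ being rectangular (full column rank or a pseudo-inverse reading of $(A^TA)^{-1}$) is a legitimate refinement that the paper glosses over, but it does not change the argument.
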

\begin{proof}
We derive the closed-form solution of the optimal controller given a fixed terminal state condition.
This is similar to \cite{bridge}, where the reverse process is accelerated using momentum (see also \cite{kappen2008stochastic,basar2020lecture} for further details on this approach). The distinction, however, lies in the treatment of the terminal constraint. 
For completeness, we provide full details of the proof below.

To derive the closed-form solution\footnote{With slight abuse of notation, we use $\vx_t$ to denote $X^u_t$ and $\vu_t$ to denote $u(X^u_t, t)$ in the deterministic case.}, recall from equation (\ref{eq:oc-val}) that  
$\ell(\vx_t,\vu_t,t) = \frac{1}{2} \left\|\vu_t\right\|^2 $ and the terminal cost $h(\vx_1) = \frac{\gamma}{2}\left\|A\vx_1 - y_1 \right\|^2$. 
Let $\vp_t$ represent $\nabla_\vx V^*(\vx,t)$ in \textbf{Theorem~\ref{thm:hjb}}. 
Then, the Hamiltonian of the control problem (\ref{eq:per-oc-0}) is given by
\begin{align*}
    H(\vx_t, \vp_t, \vu_t,t) 
    & 
    = \ell(\vx_t, \vu_t, t) + \vp_t^T\vu_t
    \\
    &
    = \frac{1}{2} \left\|\vu_t \right\|^2 + \vp_t^T\vu_t.
\end{align*}
Since the minimizer of the Hamiltonian is $\vu_t^* = -\vp_t$, the value function becomes 
\begin{align}
    V^*= \min_{\vu_t} H(\vu_t, \vp_t, \vu_t,t) =H(\vu_t, \vp_t, \vu_t^*,t) = -\frac{1}{2} \left\|\vp_t \right\|^2.
\end{align}
Now, we use minimum principle theorem~\cite{basar2020lecture} to obtain the following set of differential equations:
\begin{align}
    \label{eq:oc-thm-2-1}
    \frac{\mathrm{d}\vx_t}{\mathrm{d}t} 
    & 
    = \nabla_{\vp} H\left(\vx_t, \vp_t,\vu_t^*, t \right) = -\vp_t;
    \\
    \label{eq:oc-thm-2-2}
    \frac{\mathrm{d}\vp_t}{\mathrm{d}t} 
    &
    =
    -\nabla_{\vx}  H\left(\vx_t, \vp_t,\vu_t^*, t \right) = 0;
    \\
    \label{eq:oc-thm-2-3}
    \vx_{t_0} 
    &
    =
    x_0;
    \\
    \label{eq:oc-thm-2-4}
    \vp_{t_N} 
    &
    =
    \nabla_{\vx} h\left(\vx_{t_N},t_N\right) 
    % = \frac{\gamma}{2} \nabla_{\vx} \left\| A\vx_{t_N} - y_1\right\|^2
    % \\
    % & 
    = 
    \gamma A^T\left(A\vx_{t_N} - y_1\right).
\end{align}
Integrating both sides of (\ref{eq:oc-thm-2-1}), we have
\begin{align}
    \int_{t_0}^{1} \deriv \vx_t = -\int_{t_0}^{1} \vp_t \deriv t= -\vp\left(1-t_0 \right),
\end{align}
where the last equality is due to (\ref{eq:oc-thm-2-2}), which states that $\vp_t $ is a constant independent of time $t$. 
This implies $\vx_{1} = \vx_{t_0} - \vp(1-t_0)$. From (\ref{eq:oc-thm-2-4}), we know for $t_N=1$ that
\begin{align}
    \vp_1 
    & 
    = \gamma A^T\left(A\vx_{1} -y_1\right)
    \nonumber
    \\
    & 
    = \gamma \left(A^TA \left(x_0 - \vp(1-t_0)\right) - A^Ty_1
    \right)
    \nonumber
    \\
    &
    = \gamma A^TAx_0 - \gamma A^TA\vp_1(1-t_0) - \gamma A^Ty_1
    \label{eq:oc-thm-2-5}
\end{align}
Rearranging (\ref{eq:oc-thm-2-5}) and solving for $\vp_1$, we get 
\begin{align}
    \vp_1 
    &
    =  \gamma \left(I + \gamma A^T A\left(1-t_0\right) \right)^{-1} \left( A^T A x_0 - A^Ty_1\right)
    \nonumber
    \\
    &
    =
    \left(\frac{I}{\gamma} + A^T A\left(1-t_0\right) \right)^{-1} \left( A^T A x_0 - A^Ty_1\right) = \vp
    \label{eq:oc-thm-2-6}
\end{align}
Passing (\ref{eq:oc-thm-2-6}) through the limit $\gamma \rightarrow \infty$, we get
\begin{align}
    \lim_{\gamma \rightarrow \infty} \vp = \frac{\left(A^T A\right)^{-1} \left(A^TAx_0 - A^Ty_1\right)}{1-t_0}.
\end{align}
Therefore, the optimal control becomes
$\vu_t^* = -\vp = -\frac{\left(A^T A\right)^{-1} \left(A^TA\vx_t - A^Ty_1\right)}{1-t}$, and the resulting dynamical system is given by 
\begin{align*}
    \deriv \vx_t = \frac{\left(A^T A\right)^{-1} A^T\left(y_1 - A\vx_t \right)}{1-t} \deriv t,
\end{align*}
for the deterministic process and 
\begin{align*}
    \deriv \vx_t = \frac{\left(A^T A\right)^{-1} A^T\left(y_1 - A\vx_t \right)}{1-t} \deriv t + \deriv W_t,
\end{align*}
for the stochastic process with the Brownian motion.  
% For Brownian bridge, the dynamical system becomes
% \begin{align*}
%     \deriv \vx_t = \frac{\left( \left(A^T A\right)^{-1}A^T\vy -\vx_t\right)}{1-t} \deriv t + \sqrt{2} ~\deriv \vw_t.
% \end{align*}
This completes the statement of the proof. 
\end{proof}
\noindent\textbf{Implications:} 
The optimal controller depends on the reference {\em style features} $y_1$ at the terminal time (instead of the image content $x_1$ as in Appendix~\ref{sec:gm-oc}).
The reverse dynamic can be simulated in practice by using CSD~\citep{csd} as a style feature extractor and replacing $y_1$ with the extracted style features from the expected terminal state $\E\Big[X^u_1|X^u_t\Big]$, as discussed in \textbf{Remark~\ref{rmk:gm-oc}}.
% Therefore, it is important to disentangle style and content from the reference image and only use the style features in the terminal cost.
This makes the controller drift causal and non-anticipating future information.

% We note that when $y_1 = Ax_1$, then the optimal controller becomes $u^* = \frac{x_1 - \vx_t}{1-t}$, which indicates that the controlled dynamic will converge to the same reference style image $x_1$.
% This is not the intended outcome because our goal is to generate different contents in the style of $y_1$.
% Now, at a higher level, y1 is style variable, A is not invertible, so many images x that might have the same style,  we want a different image but the same style

\subsection{Incorporating personalized style constraint through modulation and a terminal cost}
\label{sec:app-mod}
In this section, we study a control problem where the velocity field is a linear combination of the state and the control variable. 
This problem is interesting to study because of the following reason.
The reverse-SDE dynamic of the standard OU process has a drift field of the form:
\begin{align*}
v\left(X_t,t\right) =  -X_t - 2 \nabla \log p(X_t,t).
\end{align*}
For a Gaussian prior $X_0 \sim \gN\left(0,\I\right)$,  the law of the OU process satisfies $\nabla \log p\left(X_t, t\right) = -X_t$, and the corresponding drift field becomes $v\left(X_t,t\right) = X_t$. Our goal is to modulate this drift field using a controller $u\left( X^u_t,t\right)$. The result below provides the structure of the optimal control (again in the setting where the terminal objective is known; see Appendix A1).

\begin{proposition}
\label{thm:per-ocs-appendix}
Suppose $A\in \R^{k\times d}$ be a linear style extractor that operates on the terminal state $X^u_1 \in \R^d$.
Let $\vp_t$ denote $\nabla_\vx V^*(\vx,t)$ in HJB equation~\eqref{thm:hjb}.
Given reference style features $y_1$, consider the control problem:
\begin{align}
    & \min_{u \in\gU} \int_{t_0}^{1} \frac{1}{2} \left\|u(X^u_t,t)\right\|^2 dt + \frac{\gamma}{2} \left\|AX^u_1 - y_1\right\|_2^2, 
    \label{eq:per-ocs-0}\\
    & \text{where }\deriv X^u_t = \Big[X^u_t+ u(X^u_t,t)\Big]\,\deriv t,~~~~~ X^u_{t_0}=x_0,
\end{align}
 Then, the optimal controller becomes $u^*(t) = -\vp_t$, where the instantaneous state $X^u_t = \vx_t$ and $\vp_t$ satisfy the following:
 \begin{align*}
    \begin{bmatrix}
    \vx_t 
    \\
    \vp_t
    \end{bmatrix}
    =
    \begin{bmatrix}
    x_0 e^t - \frac{\gamma}{2} A^T\left(A\vx_1 - y_1\right)e^{1+t} 
    +
    \frac{\gamma}{2} A^T\left(A\vx_1 - y_1\right)e^{1-t}
    \\
    \gamma A^T\left(A\vx_1 - y_1\right) e^{1-t}
    \end{bmatrix}.
\end{align*}
\end{proposition}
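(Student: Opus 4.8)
The plan is to mimic the derivation of Proposition~\ref{thm:per-oc-appendix} but with the modulated drift $v(X^u_t, u, t) = X^u_t + u(X^u_t,t)$, which turns the coupled state/costate system into an inhomogeneous linear ODE system rather than the trivially integrable one of the previous proof. First I would write down the Hamiltonian for the problem~\eqref{eq:per-ocs-0}:
\begin{align*}
    H(\vx_t, \vp_t, \vu_t, t) = \tfrac12 \|\vu_t\|^2 + \vp_t^T(\vx_t + \vu_t),
\end{align*}
minimize over $\vu_t$ to get $\vu_t^* = -\vp_t$, and then apply the minimum principle (Theorem~\ref{thm:hjb} and the Pontryagin conditions) to obtain the canonical equations
\begin{align*}
    \dot{\vx}_t = \nabla_{\vp} H = \vx_t - \vp_t, \qquad \dot{\vp}_t = -\nabla_{\vx} H = -\vp_t,
\end{align*}
together with the boundary conditions $\vx_{t_0} = x_0$ and $\vp_1 = \gamma A^T(A\vx_1 - y_1)$ coming from the terminal cost.

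The second step is to solve this linear system. The costate equation $\dot{\vp}_t = -\vp_t$ integrates immediately to $\vp_t = c\,e^{-t}$ for a constant vector $c$; matching the terminal condition at $t=1$ gives $c = \gamma A^T(A\vx_1 - y_1)e$, hence $\vp_t = \gamma A^T(A\vx_1 - y_1)e^{1-t}$, which is exactly the claimed second component. The state equation $\dot{\vx}_t = \vx_t - \vp_t$ is then a scalar-type first-order linear ODE with forcing term $-\vp_t$; using the integrating factor $e^{-t}$ one gets $\frac{\d}{\d t}(e^{-t}\vx_t) = -e^{-t}\vp_t = -\gamma A^T(A\vx_1 - y_1)e^{1-2t}$, and integrating from $t_0$ to $t$ and solving for $\vx_t$ produces a particular solution proportional to $e^{1-t}$ plus a homogeneous part proportional to $e^{t}$. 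I would fix the two integration constants using $\vx_{t_0} = x_0$; taking $t_0 = 0$ (as the displayed formula evidently assumes) the bookkeeping collapses to $\vx_t = x_0 e^t - \frac{\gamma}{2}A^T(A\vx_1 - y_1)e^{1+t} + \frac{\gamma}{2}A^T(A\vx_1 - y_1)e^{1-t}$, matching the first component.

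The main obstacle I anticipate is purely the careful bookkeeping of integration constants and the role of $\vx_1$: unlike Proposition~\ref{thm:per-oc-appendix}, the answer here is stated \emph{implicitly}, with $\vx_1$ (and hence $\vp_1$) appearing on the right-hand side, so one must be consistent about whether $\vx_1$ is treated as a known terminal value or eliminated via the $t=1$ instance of the state formula; verifying self-consistency at $t=1$ (plugging $t=1$ into the $\vx_t$ expression should reproduce an identity, and into $\vp_t$ should give $\gamma A^T(A\vx_1-y_1)$) is the natural sanity check. A secondary subtlety is that no $\gamma\to\infty$ limit is taken here (unlike the earlier propositions), so the statement is the finite-$\gamma$ coupled transition and one should not attempt to simplify by sending $\gamma\to\infty$; I would simply remark, as the paper does, that this closed form is a building block and that in practice $\vx_1$ is replaced by Tweedie's estimate $\E[X^u_1\mid X^u_t]$ to make the controller causal.
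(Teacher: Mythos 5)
Your proposal is correct and follows essentially the same route as the paper: the same Hamiltonian, the same minimizer $\vu_t^*=-\vp_t$, the same canonical equations $\dot{\vx}_t=\vx_t-\vp_t$, $\dot{\vp}_t=-\vp_t$ with $\vx_{0}=x_0$ and $\vp_1=\gamma A^T(A\vx_1-y_1)$, and the same implicit (finite-$\gamma$) closed form with $\vx_1$ left on the right-hand side. The only difference is mechanical: you exploit the triangular structure (solve the costate first, then the state by an integrating factor), whereas the paper diagonalizes the $2\times 2$ coefficient matrix via its eigenvalues $\{1,-1\}$ and eigenvectors; both give the identical result.
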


\begin{proof}
The proof of \textbf{Proposition~\ref{thm:per-ocs-appendix}} is similar to \textbf{Proposition~\ref{thm:per-oc-appendix}}.
One key distinction is the set of differential equations obtained using minimum principle theorem~\citep{basar2020lecture}. 
We begin with the Hamiltonian: 
    \begin{align*}
    H(\vx_t, \vp_t, \vu_t,t) 
    & 
    = \ell(\vx_t, \vu_t, t) + \vp_t^T\left(\vu_t+\vx_t\right)
    \\
    &
    = \frac{1}{2} \left\|\vu_t \right\|^2 + \vp_t^T\vu_t
    +
    \vp_t^T\vx_t,
\end{align*}
which gives us the minimizer of the Hamiltonian $\vu_t^* = -\vp_t$ and its value function becomes $V^*= \min_{\vu_t} H(\vu_t, \vp_t, \vu_t,t) =H(\vu_t, \vp_t, \vu_t^*,t) = -\frac{1}{2}\lVert\vp_t \rVert^2 + \vp_t^T\vx_t$.
By the minimum principle theorem~\citep{basar2020lecture},
\begin{align}
    \label{eq:ocs-thm-lqr-1}
    & \dot{\vx}_t \coloneqq \frac{\mathrm{d}\vx_t}{\mathrm{d}t} 
    = \nabla_{\vp} H\left(\vx_t, \vp_t,\vu_t^*, t \right) = -\vp_t + \vx_t;
    \\
    \label{eq:ocs-thm-lqr-2}
    &\dot{\vp}_t \coloneqq \frac{\mathrm{d}\vp_t}{\mathrm{d}t} 
    =
    -\nabla_{\vx}  H\left(\vx_t, \vp_t,\vu_t^*, t \right) = -\vp_t;
    \\
    \label{eq:ocs-thm-lqr-3}
    &
    \vx_{t_0} 
    =
    x_0;
    \\
    \label{eq:ocs-thm-lqr-4}
    &
    \vp_{t_N} 
    =
    \nabla_{\vx} h\left(\vx_{t_N},t_N\right) 
    % = \frac{\gamma}{2} \nabla_{\vx} \left\| \vx_{t_N} - x_1\right\|^2
    % \\
    % & 
    = 
    \gamma A^T\left(A\vx_{t_N} - y_1\right).
\end{align}
This leads to a coupled system of differential equations with boundary conditions as given below:
\begin{align*}
\begin{bmatrix}
\dot{\vx}_t
\\ 
\dot{\vp}_t
\end{bmatrix}
& =
\begin{bmatrix}
1 & -1
\\ 
0 & -1
\end{bmatrix}
\begin{bmatrix}
\vx_t
\\ 
\vp_t
\end{bmatrix};
\\
\vx_{t_0} & = x_0;\\
\vp_1 & = \gamma A^T\left(A\vx_1 - y_1\right).
\end{align*}
This can be solved numerically using ODE solvers, see ~\cite{fleming2012deterministic,basar2020lecture} for details.
Denote $\dot{\vq_t}=
\begin{bmatrix}
\dot{\vx}_t
\\ 
\dot{\vp}_t
\end{bmatrix}$
and $\mathrm{M}=\begin{bmatrix}
1 & -1
\\ 
0 & -1
\end{bmatrix}$.
We seek a solution of the form $\vq(t) = \vq e^{\lambda t}$. If $\vq(t)$ is a solution of the above problem, then it must satisfy the following eigen value problem:
\begin{align}
\label{eq:ocs-thm-lqr-eig-1}
    \vq e^{\lambda t} \lambda = \M\vq e^{\lambda t}.
\end{align}
Writing the characteristic polynomial of \eqref{eq:ocs-thm-lqr-eig-1}, we get $\det{\Big(\M-\lambda \I\Big)}=0$, which gives the eigen values $\lambda=\{1,-1\}$. 
Substituting these eigen values, we have 
\begin{align*}
    \begin{bmatrix}
    0 & -1
    \\
    0 & -2
    \end{bmatrix}
    \begin{bmatrix}
    q_1
    \\
    q_2
    \end{bmatrix}
    =
    \mathbf{0},
    ~~~~
    \begin{bmatrix}
    2 & -1
    \\
    0 & 0
    \end{bmatrix}
    \begin{bmatrix}
    q_1
    \\
    q_2
    \end{bmatrix}
    =
    \mathbf{0},
\end{align*}
which gives two fundamental solutions. By combining these two, we obtain the final solution 
\begin{align*}
    \begin{bmatrix}
    \vx_t
    \\
    \vp_t
    \end{bmatrix}
    =
    \omega
    \begin{bmatrix}
    1
    \\
    0
    \end{bmatrix}
    e^{t}
    +
    \xi
    \begin{bmatrix}
    1
    \\
    2
    \end{bmatrix}
    e^{-t},
\end{align*}
where $\omega$ and $\xi$ can be found using the boundary conditions.
Since $\vx_0 = x_0$ and $\vp_1 = \gamma A^T\left(A\vx_1 - y_1\right)$, we get $\omega = x_0 - \frac{\gamma}{2} A^T\left(A\vx_1 - y_1\right)e$ and $\xi = \frac{\gamma}{2} A^T\left(A\vx_1 - y_1\right)e$.
Substituting the values of $\omega$ and $\xi$, we arrive at 
\begin{align*}
    \begin{bmatrix}
    \vx_t 
    \\
    \vp_t
    \end{bmatrix}
    =
    \begin{bmatrix}
    x_0 e^t - \frac{\gamma}{2} A^T\left(A\vx_1 - y_1\right)e^{1+t} 
    +
    \frac{\gamma}{2} A^T\left(A\vx_1 - y_1\right)e^{1-t}
    \\
    \gamma A^T\left(A\vx_1 - y_1\right) e^{1-t}
    \end{bmatrix}.
\end{align*}

This completes the proof of the proposition.

\end{proof}

\noindent\textbf{Summary:}
Though Appendices~\ref{sec:gm-oc}-\ref{sec:app-mod}, we have seen the connection between optimal control and diffusion based generation with a personalized terminal constraint. The general strategy has been to derive the optimal controller with known terminal state, and then replace the terminal state in the controller with its estimate using Tweedie's formula. While the controllers so far have an explicit form, in practice, the data distribution is not Gaussian, and thus, we do not have a closed-form expression for the drift of the controller.

This line of analysis, however, points to our method RB-Modulation. As discussed in \S\ref{sec:method}, we incorporate a consistent style descriptor in our controller's terminal cost and numerically evaluate the drift of the controller at each reverse time step either through back propagation through the score network, or an approximation based on proximal gradient updates.

\begin{algorithm}[!t]
    % \scriptsize
    % \setstretch{1.5}
    \caption{Reference-Based Modulation (RB-Modulation) for large-scale generative models}
    \label{alg:rbm-large}
         \KwIn{Diffusion time steps $T$,
        %  reference prompt $\vy$,
         reference style image $\vz_0$,\\
         \hspace{1cm}
         style descriptor $\Psi(\cdot)$,
        %  semantic compressor $\enc(\cdot)$,
        %  model previewer $\dec(\cdot)$,    
         score network $s(\cdot,\cdot;\theta)$\\
         \textbf{Tunable parameters:} Stepsize $\eta$, optimization steps $M$, proximal strength $\lambda$
         }
        \KwOut{Personalized latent $X^u_0$}
        Initialize $\vx_T \gets \gN\left(0,\mathrm{I}_d\right)$\\
        Initialize controller $u \in \R^d$\\
        % \hfill $\triangleright$ proposed reverse process\\
        \For{$t=T$ \KwTo $1$}{
        Compute posterior mean $\E\Big[X^u_0|X^u_t= \vx_t\Big] = \frac{\vx_t}{\sqrt{\bar{\alpha}_t}} + \frac{(1-\bar{\alpha}_t)}{\sqrt{\bar{\alpha}_t}} s\left(\vx_t,t;\theta\right)$\\
        Initialize optimization variable $\vx_0 = 0$\\
        \For{$m=1$ \KwTo $M$}{
            Compute controller's cost $\gL(\vx_0) \coloneqq \lVert \Psi(\vz_0) - \Psi(\vx_0)\rVert^2_2 + \lambda \lVert \vx_0 - \E\left[X^u_0 | X^u_t = \vx_t\right]  \rVert_2^2$\\
            Update optimization variable $\vx_0 = \vx_0 - \eta \nabla_{\vx_0} \gL(\vx_0)$
            }
        Compute previous state $\vx_{t-1} = \text{DDIM}(\vx_0,\vx_t)$~\cite{ddim}
        }
        \Return $X^u_0$
\end{algorithm}

\begin{figure}[!h]
    \centering
    \includegraphics[width=\textwidth]{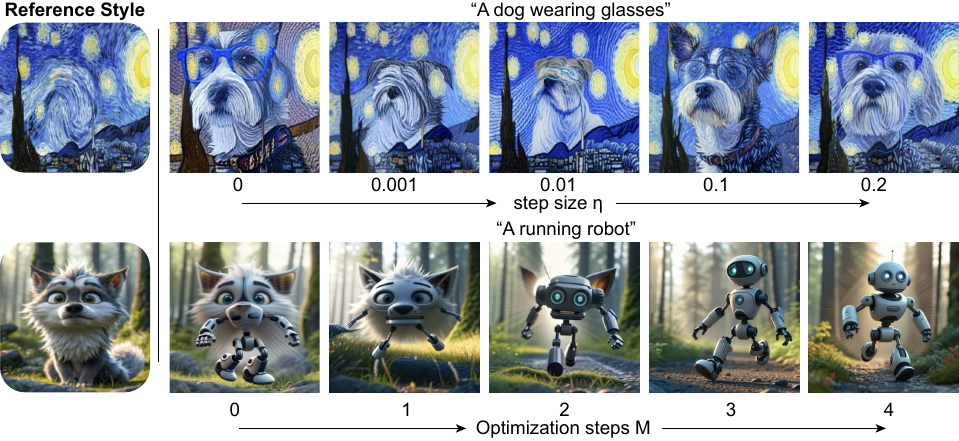}
    \caption{{\bf Qualitative results of different tunable hyperparameters:} Improved style-prompt disentanglement are shown when increasing to our best configurations optimization step size $\eta=0.1$ and optimization steps $M=3$.
    }
    \label{fig:addn-hparams}
\end{figure}

\section{Additional Experimental Evaluation}
\label{sec:addn-exps}

\subsection{Implementation details}
\label{sec:addn-imple-details}
\noindent\textbf{Baselines:} 
We demonstrate the applicability of our method RB-Modulation with StableCascade~\cite{sc} (released before April 2024).
To our best knowledge, RB-Modulation is the first framework that introduces new capabilities to StableCascade by incorporating SOC and AFA modules. 
Since there are no existing training-free personalization baselines designed for StableCascade, we seek alternatives built on other comparable state-of-the-art models such as SDXL \citep{sdxl} and Muse \citep{muse}\footnote{Note that StableCascade and SDXL have comparable performance in prompt alignment  whereas StableCascade is more efficient due to a highly compressed semantic latent space~\citep{sc}.}.

Among alternate training-free baselines, InstantStyle~\citep{instantstyle} does not directly apply to StableCascade because it requires feature injection into specific layers of an IP-Adapter, which is not available for StableCascade. Similarly, StyleAligned~\citep{stylealigned} relies on DDIM inversion, which is currently applicable only to single-stage diffusion models. In contrast, StableCascade utilizes a two-stage diffusion process, making the application of standard DDIM inversion~\citep{ddim} infeasible.
We run the official source code for InstantStyle\footnote{\url{https://github.com/InstantStyle/InstantStyle}} and StyleAligned\footnote{\url{https://github.com/google/style-aligned}}.
In the absence of a style description, we use ``image in style'' for DDIM inversion in StyleAligned.
Following InstantStyle~\citep{instantstyle}, we also compare with     IP-Adapter. We include the quantitative comparison in Table~\ref{tab:style-quant}, and only compare qualitatively with stronger baselines in Figure~\ref{fig:style}. 

For completeness, we also compare with training-based baselines: StyleDrop~\citep{styledrop} and ZipLoRA~\citep{ziplora}.
Since the official codebase for StyleDrop\footnote{\url{https://github.com/aim-uofa/StyleDrop-PyTorch}} and ZipLoRA\footnote{\url{https://github.com/mkshing/ziplora-pytorch}} are not publicly available, we use the third-party implementation and follow the training details in the corresponding papers. It takes ~5 minutes for training StyleDrop for 1000 steps and ~20 minutes for training each LoRA for ZipLoRA. We train each LoRA with only one reference image for both content and styles to make a fair comparison with other methods. Similarly, we train StyleDrop with only one reference image. When a style description is not provided, we follow the original paper~\citep{styledrop} and use ``in a [v*] style'' instead.

\noindent\textbf{Tunable parameters.} Our method introduces only two hyper-parameters: stepsize $\eta$ and optimization steps $M$ in \textbf{Algorithm~\ref{alg:rbm-small}}. We use DDIM sampling with $\eta=0.1$ and $M=3$ for all the experiments.

\noindent\textbf{Content-style composition.} The prompt-guided content-style composition task introduces a new layer of complexity beyond stylization \S\ref{sec:exp-stylization}. This task necessitates the disentanglement of the text prompt, reference style image, and reference content image through additional conditioning. Such complexity poses significant challenges for DDIM inversion \citep{ddim} and attention caching mechanisms \citep{stylealigned} due to the inherent dependencies on multiple reverse paths.

Our AFA module effectively addresses these challenges. 
It manipulates transformer layers to easily incorporate these additional conditions. 
The content information is integrated in a manner similar to the style information. Specifically, we use a pre-trained ViT-L/14 model to extract content features in the SOC framework and update the latent embeddings concurrently via the AFA module, using an additional set of keys and values illustrated in Figure \ref{fig:attn}.

Furthermore, to better preserve the identity of the foreground content, we extract the desired content using LangSAM\footnote{\url{https://github.com/luca-medeiros/lang-segment-anything}} based on the content prompt. This step is optional but offers more user control when multiple subjects are present in the reference image.

\subsection{Implementation using large-scale diffusion models}
\label{sec:add-method}
The exact implementation of our control problem~\eqref{eq:dm-cont-rev-1} is given in \textbf{Algorithm~\ref{alg:rbm-small}}, which follows from our theoretical insights.
In practice, our controller encounters a challenge when the generative model contains billions of parameters as in StableCascade~\cite{sc} due to back propagation through the score network, as discussed in \S\ref{sec:method}.
Our strategy to overcome this practical challenge involves a proximal gradient update, given in Line~7-8 of \textbf{Algorithm~\ref{alg:rbm-large}}.
To accelerate the sampling process, we run a few steps ($M=3$) of gradient descent after initializing $\vx_0 = \E\left[X^u_0|X^u_t=\vx_t\right]$, resulting in only two hyperparameters to tune: stepsize $\eta$ and the number of optimization steps $M$.
Further, since the CSD model expects a clean image to extract style features, we apply the previewer model available in StableCascade on the terminal state before extracting style features.
After obtaining the final personalized latent using our \textbf{Algorithm~\ref{alg:rbm-small}} and \textbf{Algorithm~\ref{alg:rbm-large}}, we follow the decoding process as per the inference pipeline of the adopted generative model.

\begin{figure}[!t]
    \centering
    \includegraphics[width=\textwidth]{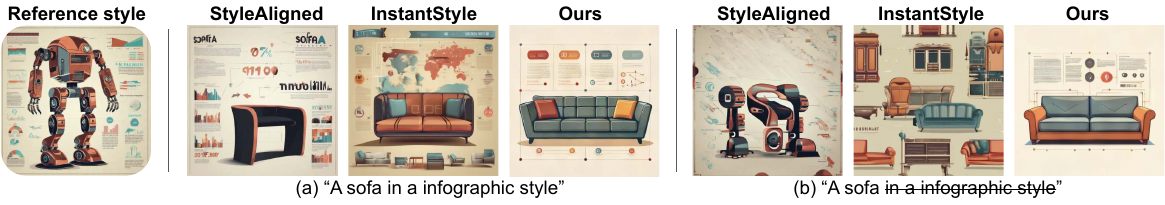}
    \caption{{\bf Impact of style descriptions in the prompt:}
    (a) When style descriptions are provided, all methods yield better results.
    (b) Without style descriptions (\eg, hard for users to describe in text), alternative methods could struggle to capture the intended style in the reference image.
    Our method offers consistent stylization even without explicit style descriptions. 
    }
    \label{fig:addn-style-desc}
\end{figure}

\subsection{Impact of hyperparameters on controlling style and content features}
As detailed in \S\ref{sec:method} and the ablation study in \S\ref{sec:exp-stylization}, SOC helps disentangle the style and the prompt information by updating the drift field in the standard reverse-SDE. 
We study the impact of the two hyperparameters present in \textbf{Algorithm~\ref{alg:rbm-small}} and \textbf{Algorithm~\ref{alg:rbm-large}} that enables this disentanglement, as shown in Figure~\ref{fig:addn-hparams}.
We found better disentanglement when the step size $\eta=0.1$ and the number of optimization steps $M=3$.
However, increasing the step size further results in style image information leaking into the output (top row). Additionally, adding more optimization steps increases computational overhead without yielding much performance gain (bottom row).

\subsection{Style description in text prompts for better assimilation of unique styles}
\label{sec:addn-style-desc}
In addition to the quantitative analysis in \S\ref{sec:exp-stylization}, Figure~\ref{fig:addn-style-desc} demonstrates that our method generates consistent stylized results with and without the style description. 
In contrast, the alternatives fail to accurately follow the prompt when the style description is absent.
Although all results show noticeable improvement when the style description is provided, it is often challenging for users to describe styles in many real-world scenarios. 
We believe our early results by RB-Modulation will pave the way for interesting future research along this direction.

We present additional qualitative results on stylization with (Figure~\ref{fig:addn-style}) and without (Figure~\ref{fig:addn-style-wo-txt}) style descriptions using StyleAligned dataset~\citep{stylealigned}. Our results consistently align with the reference style and the prompt, while other methods encounter several issues: (1) difficulty in following prompt guidance, (2) information leakage from the style reference image, and (3) failure to achieve reasonable prompt/style alignment in the absence of style descriptions.

\begin{figure}[!t]
    \centering
    \includegraphics[width=\textwidth]{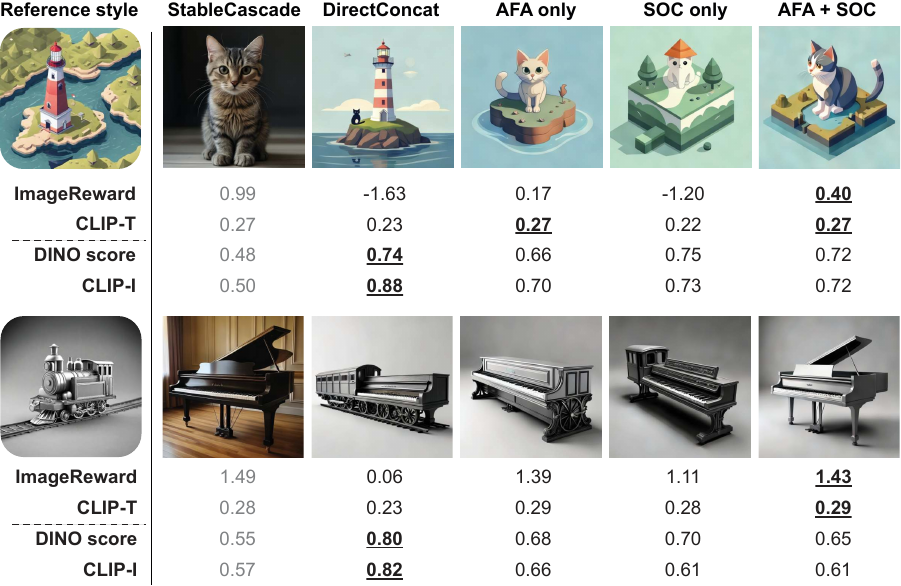}
    \caption{{\bf Comparison of different evaluation metrics:} The StableCascade output is provided for reference because it doesn't use the reference style image. The highest score for each metric is marked \textbf{bold} with underscore. We compare four metrics: ImageReward and CLIP-T score for prompt alignment, DINO and CLIP-I score for style alignment. The prompt for the top row is ``A cat'' and for the bottom row is ``A piano''.
    }
    \label{fig:bad-metric}
\end{figure}

\subsection{Challenges of evaluation metrics in measuring style and content leakage}
\label{sec:addn-bad-metric}
In \S\ref{sec:exps}, we discussed the limitations of metrics used in previous works \citep{styledrop, stylealigned, ziplora}, such as DINO \citep{dino} and CLIP-I score \citep{clip}. To quantify these limitations, we use results from our ablation study shown in Figure~\ref{fig:ablation}.
As illustrated in Figure~\ref{fig:bad-metric}, DINO and CLIP-I scores are not well-suited for measuring style similarity in the presence of content leakage. This is because images with high semantic correlations to the reference style image consistently receive higher scores. For instance, in the top row, although the last two columns visually align more closely with the isometric illustration styles of the reference image, the DirectConcat output featuring a lighthouse receives higher scores. The margin is particularly pronounced for CLIP-I score.

A similar observation can be made in the bottom row, where images containing train-related objects receive higher scores regardless of their stylistic similarity. Conversely, images with less content leakage (as seen in the last column) are assigned lower scores. This indicates that DINO and CLIP-I scores prioritize semantic content over stylistic fidelity, thus failing to accurately measure style similarity in scenarios where content leakage prevails.

On the other hand, our final method (last column), which combines AFA and SOC, demonstrates high scores for both prompt alignment metrics: ImageReward~\citep{imagereward} and CLIP-T~\citep{clip}. This method also shows higher user preference, as evidenced in Table~\ref{tab:user-study}. In contrast, the DirectConcat results suffer from information leakage and poor alignment with the prompt, resulting in significantly lower or even negative reward scores.

In the ablation study, our primary focus is on the disentanglement of prompts and reference styles. The conventional metrics fail to accurately reflect true performance due to information leakage. Consequently, we emphasize qualitative demonstrations and place greater importance on user study results, as shown in Table~\ref{tab:user-study}, similar to previous approaches \citep{stylealigned, styledrop}.

\subsection{More qualitative results on stylization and content-style composition}

We also showcase results on consistent style generation using user defined prompts in Figure~\ref{fig:addn-style-consistency}. Our results with different prompts consistently align with the styles while introducing various scenarios following the prompts. The other methods face challenges like information leakage (\eg hiking boots and the monocular) and monotonous scenes (\eg InstantStyle).
Note that the original StyleDrop paper~\cite{styledrop} has mentioned its difficulty when training with one image without description. We keep the results for completeness even though they are less satisfying.
Besides, we also demonstrate more qualitative results for content-style composition in Figure~\ref{fig:addn-content-style}.

\begin{figure}[!h]
    \centering
    \includegraphics[width=\textwidth]{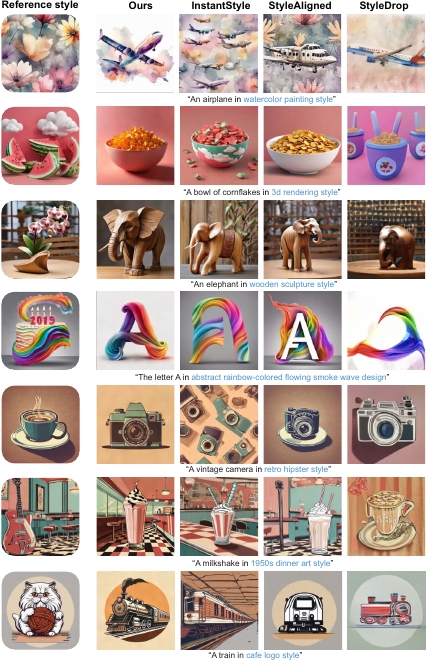}
    \caption{{\bf Additional qualitative results for stylization with style description:} While the alternative methods face challenges like following the prompts (\eg, multiple airplanes instead of an airplane) and information leakage (\eg, the clouds on the cornflake bowl and the guitar in the milkshake image), our method demonstrates strong performance on both prompt and style alignment.
    Style description is  in blue.
    }
    \label{fig:addn-style}
\end{figure}

\begin{figure}[!h]
    \centering
    \includegraphics[width=\textwidth]{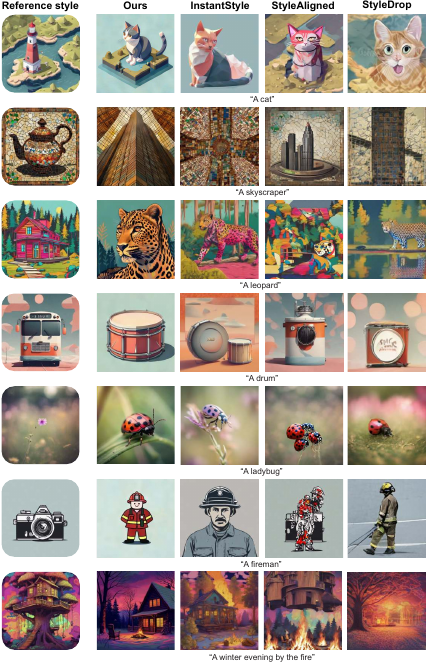}
    \caption{{\bf Additional qualitative results for stylization without style description:} StyleAligned and StyleDrop show severe performance drop after removing the style descriptions (\eg, see fireman and cat images). InstantStyle results show more information leakage (\eg, the pink ladybug and leopard), whereas no obvious performance drop is observed in our results.
    }
    \label{fig:addn-style-wo-txt}
\end{figure}

\begin{figure}[!h]
    \centering
    \includegraphics[width=\textwidth]{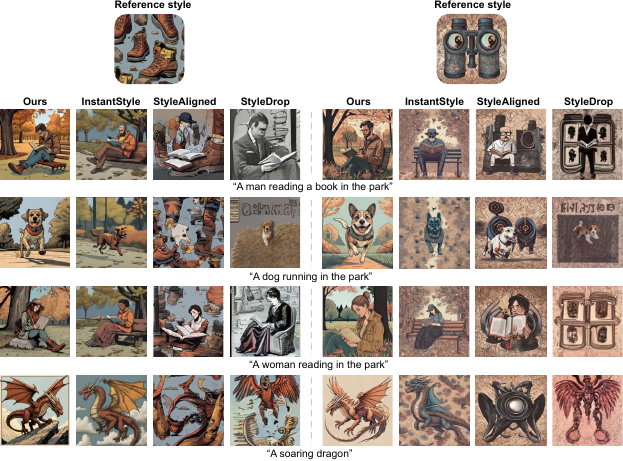}
    \caption{{\bf Additional qualitative results for consistent stylization for user defined prompts:} With no style description, our results demonstrate more diversity while following the styles and prompts. InstantStyle results show monotonous scenes and StyleAligned results suffer from severe information leakage. We report StyleDrop results for completeness and it is known to perform worse with no style description and single training image \cite{styledrop}.
    }
    \label{fig:addn-style-consistency}
\end{figure}

\begin{figure}[!h]
    \centering
    \includegraphics[width=\textwidth]{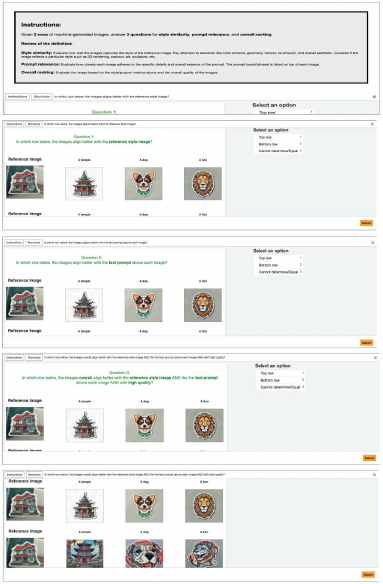}
    \caption{{\bf User study interface:} Three randomly sampled outputs are shown for each method given a style reference image, forming two rows of images. The users are asked to answer three questions on (1) style alignment (2) prompt alignment and (3) overall alignment and quality.
    }
    \label{fig:addn-user-study}
\end{figure}

\begin{figure}[!h]
    \centering
    \includegraphics[width=\textwidth]{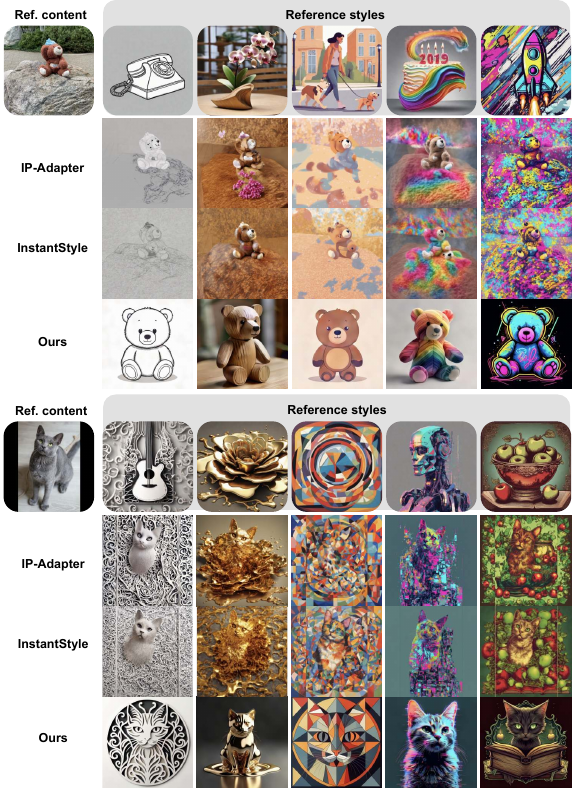}
    \caption{{\bf Additional qualitative results for content-style composition:} Our results show better prompt and style alignment while preserving reference content without leaking contents from the reference style images (\eg background of the first column and fruits in the last column,). Unlike compared baselines, our method is not restricted to a fixed pose of the reference content image, illustrating sample diversity.
    }
    \label{fig:addn-content-style}
\end{figure}

\subsection{Human evaluation to discern highly subjective nature of style}
\label{sec:addn-user-study}
We conduct a user study with 155 participants via Amazon Mechanical Turk using 100 styles from the StyleAligned dataset~\cite{stylealigned}. 
The study requires no personally identifiable information of the participants.
There is no risk incurred and no vulnerable population. 
The standard guidelines have been followed while conducting the user study.

We first provide participants with instructions to familiarize them with the relevant terminologies. 
For each style, we randomly sample three outputs using three different prompts. 
Participants see two rows of model outputs in random order (3 images per row) and answer 3 questions, as illustrated in Figure~\ref{fig:addn-user-study}. 
\begin{enumerate}
    \item In which row below, the images align better with the reference style image?
    \item In which row below, the images align better with the reference text prompt above each image?
    \item In which row below, the images overall align better with the reference style image AND the text prompt above each image AND with high quality?
\end{enumerate}
For each question, participants choose one of three options. We collect 8 responses for each question, with each question comparing our method against one of the alternatives. In total, we gathered 7,200 responses.

\subsection{Failure cases of training-free stylization using RB-Modulation}
In Figure~\ref{fig:addn-style-failure}, we illustrate stylization of different letters using a single reference style image. 
Although our method captures the intended style and generates prompted letters, we notice that there is an inherent tendency to generate upper-case letters (Figure~\ref{fig:addn-style-failure} (a)), even though it is prompted to generate lower-case letters.
Upon further investigation, we observed that this issue stems from the underlying generative model StableCascade, as shown in Figure~\ref{fig:addn-style-failure} (b). 
This highlights a crucial limitation of our method. 
As a training-free method, RB-Modulation shares a concern with other training-free methods~\citep{instantstyle,stylealigned,ssa} that the performance is influenced by the original generative prior.

\begin{figure}[!h]
    \centering
    \includegraphics[width=\textwidth]{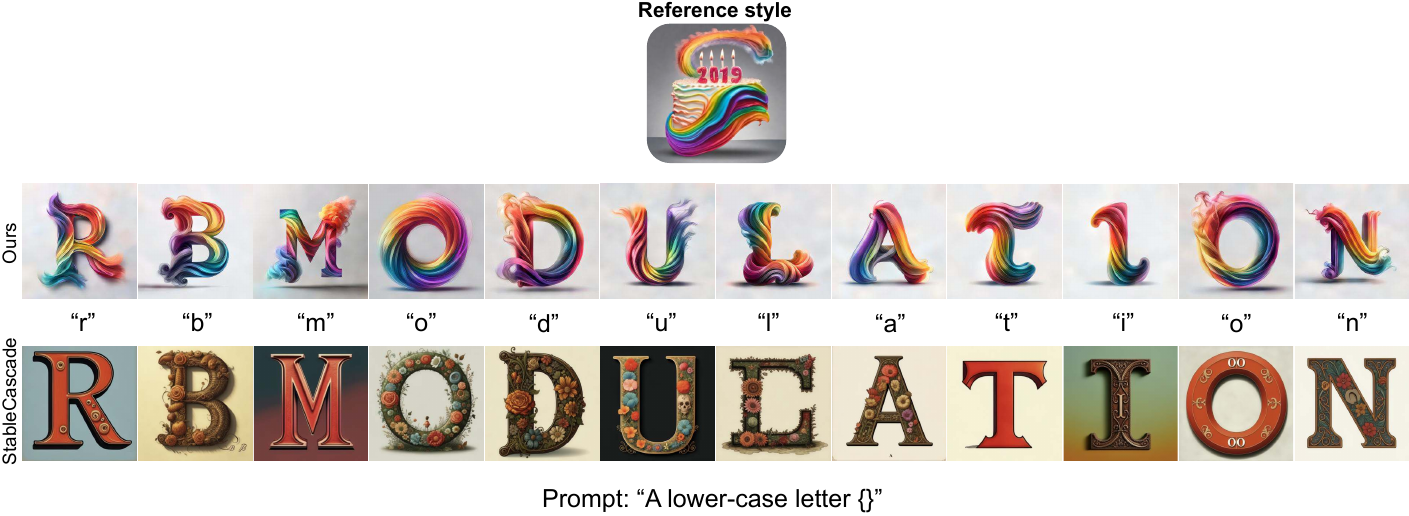}
    \caption{{\bf Failure cases for stylization:} 
    The top row shows the results of our method, RB-Modulation, while the bottom row displays the results of the backbone, StableCascade. Notably, the stylized images do not adhere to the prompt,``lower-case letter''. This highlights the limitations imposed by the pre-trained generative priors on the capabilities of training-free personalization models (top row).
    }
    \label{fig:addn-style-failure}
\end{figure}

\section{Broader impact statement}
\label{social-impact}
\noindent\textbf{Social impact:} Image stylization and content-style composition based on diffusion models potentially have both positive and negative social impact. This technology provides an easy-to-use tool to the general public for image generation which can help visualize their artistic ideas.
On the other hand, our work on stylization and content-style composition poses a risk of generating arts that closely mimic or infringe upon existing copyrighted material, leading to legal and ethical issues. More broadly, our method inherits the risks from T2I models which are capable of generating fake contents that can be misused by malicious users.

\noindent\textbf{Safeguards:} We build on StableCascade~\citep{sc}, which has a mechanism to filter offensive image generations. Since our method RB-Modulation builds on this pre-trained generative model, we inherit these safeguards.

\end{document}